\documentclass{article}

\usepackage[main, final]{neurips_2026}

\usepackage[utf8]{inputenc} 
\usepackage[T1]{fontenc}    
\usepackage{hyperref}       
\usepackage{url}            
\usepackage{booktabs}       
\usepackage{amsfonts}       
\usepackage{nicefrac}       
\usepackage{graphicx}
\usepackage{microtype}      
\usepackage{xcolor} 
\usepackage{wrapfig}
\usepackage{amsthm}
\usepackage{amsmath}
\usepackage{pifont}
\theoremstyle{plain}
\newtheorem{theorem}{Theorem}[section]

\newtheorem{proposition}[theorem]{Proposition}
\newtheorem{corollary}[theorem]{Corollary}

\theoremstyle{definition}
\newtheorem{definition}[theorem]{Definition}
\newtheorem{axiom}[theorem]{Axiom}

\theoremstyle{remark}

\begin{document}

\title{Relations Are Channels: Knowledge Graph Embedding via Kraus Decompositions}

%

\author{%
  Sayan K Chaki \\
  Inria, Laboratoire Hubert Curien, Université Jean Monnet\\
  \texttt{sayan.chaki@inria.fr } \\
}

\maketitle

\begin{abstract}
Knowledge graph embedding (KGE) models typically represent each relation as an operator on entity embeddings. In this work, we identify three structural axioms that any principled relation operator must satisfy, linearity, trace preservation, and complete positivity, and show that they characterize a Kraus channel structure via the Kraus representation theorem. The completeness constraint defining this family is equivalent to these axioms, providing a principled foundation rather than an externally imposed condition. Under this formulation, most existing operator-based KGE models are recoverable as special cases with Kraus rank $\kappa = 1$ under specific embedding choices. We further generalize this characterization to arbitrary metric geometries by introducing \mbox{w-Kraus} channels, which satisfy completeness by construction within their respective spaces. Building on this theory, we propose \textsc{KrausKGE}, a principled KGE model that naturally handles $1$-to-$N$ and $N$-to-$N$ relations, supports $k$-hop reasoning without requiring explicit path encoders, and eliminates the need for norm constraints on entity embeddings. Additionally, our framework yields the first theoretically grounded per-relation complexity measure in the KGE literature, with a provable lower bound in terms of the empirical relation matrix rank. Empirical evaluation demonstrates that \textsc{KrausKGE} consistently outperforms strong baselines on $N$-to-$N$ relations, with performance gains that increase monotonically with relation fan-out, in alignment with theoretical predictions.
\end{abstract}

\section{Introduction}
\label{sec:intro}
Knowledge Graphs (KGs) store factual knowledge as triples 
of the form (\textit{head entity, relation, tail entity}): 
for example, (\textit{Beethoven, Genre, Classical}) encodes 
a factual relation between two entities. Large-scale KGs 
such as YAGO~\cite{suchanek2007yago}, 
Wikidata~\cite{vrandevcic2014wikidata}, and 
Freebase~\cite{bollacker2008freebase} have found broad 
application in question answering~\cite{skandan2023question, 
wang2025llm}, recommendation~\cite{li2023survey, 
wang2024application, wang2025reinforced}, and information 
retrieval~\cite{cao2024knowledge}. However, their large 
scale and complex relational structure make them difficult 
to manipulate directly. Knowledge Graph Embedding (KGE) 
addresses this by learning continuous representations of 
entities and relations, enabling downstream tasks such as 
link prediction~\cite{rossi2021knowledge} and entity 
classification~\cite{wilcke2020end}.

Existing KGE methods organise into four families, each 
addressing a genuine limitation of the others while 
introducing its own. The first encodes each relation as 
a single explicit operator: a translation in 
TransE~\cite{bordes2013translating}, a rotation in 
RotatE~\cite{sun2019rotate}, or an orthogonal transform 
in GoldE~\cite{li2024generalizing} and 
OrthogonalE~\cite{zhu2024block}; these models are 
efficient but collapse multiple valid tails to a single 
point, failing on N-to-N relations. The second uses 
multiple operators per relation for expressiveness, as 
in NTN~\cite{socher2013reasoning} and MEIM~\cite{tran2022meim}, 
but adds operators as an engineering choice with no 
constraint ensuring they are information-preserving. The 
third replaces explicit operators with nonlinear neural 
architectures, as in ConvE~\cite{dettmers2018convolutional} 
and R-GCN~\cite{sheikh2021knowledge}, recovering 
multi-pathway expressiveness at the cost of composability 
for multi-hop reasoning. The fourth represents entities 
as probability distributions, as in 
KG2E~\cite{he2015learning} and 
TransG~\cite{xiao2016transg}, encoding uncertainty without 
a principled account of how it propagates through a 
relation. All four families share the same foundational 
gap: none asks what mathematical structure a relation 
operator must have by necessity, nor derives that 
structure from first principles.

Our work addresses these theoretical and technical gaps. By the Kraus representation theorem\cite{kraus1983states}, a linear map on density matrices is completely positive and trace preserving if and only if it admits a Kraus decomposition. Our contribution is identifying that linearity, trace preservation, and complete positivity are precisely the structural conditions any principled relation operator must satisfy in the KGE setting. Each is natural: a relation operator should be linear on entity representations, it should neither inflate nor deflate the total mass of an entity, and it should behave consistently whether an entity is considered in isolation or as part of a larger correlated structure. Together, these three axioms characterize the Kraus channel family, and the completeness constraint introduced through it is equivalent to their conjunction rather than a soft regulariser added after the fact. We further show that the same axiomatic derivation carries through to different metric geometries defined by a quadratic inner product, yielding w-Kraus channels that satisfy completeness by construction across Euclidean, elliptic, hyperbolic, and product manifold geometries. The geometry is a modelling choice; the axioms are universal.

This identification has three immediate structural consequences, all following as corollaries of completeness with no additional design choices. Multi-hop reasoning becomes architecture-free, norm constraints on entity embeddings become unnecessary, and the learned Kraus rank yields the first theoretically grounded per-relation complexity measure in the KGE literature, with a provable lower bound. The summary of our contributions are as follows:

\begin{itemize}
    \item \textit{Axiomatic characterisation of relation operators.} Three structural axioms on principled relation operators, linearity, trace preservation, and complete positivity. They characterize the Kraus channel structure via the Kraus representation theorem. Most operator-family KGE models are recoverable as special cases with $\kappa = 1$ under specific embedding choices.

    \item \textit{KrausKGE.} A concrete trainable model with density matrix entity representations and learned Kraus operators satisfying completeness by construction, requiring no external regularisation, norm penalties, or path encoders.

    \item \textit{Extension to non-Euclidean spaces.} The derivation generalises to any quadratic inner product geometry, yielding w-Kraus channels with completeness by construction across Euclidean, elliptic, hyperbolic, and product manifold spaces.

    \item \textit{Per-relation complexity measure.} The Kraus rank is the first theoretically grounded to our knowledge per-relation diagnostic in the KGE literature, with provable lower bound.
\end{itemize}
\section{Related work}
\label{sec:related}

The dominant paradigm represents each relation as a single linear operator. 
TransE~\cite{bordes2013translating} encodes relations as translations but is 
structurally unable to model N-to-N relations, collapsing multiple valid tails to a 
single point. TransH~\cite{wang2014knowledge} and TransR~\cite{lin2015learning} 
address specific limitations while retaining a single operator and requiring external 
norm constraints. The bilinear family, comprising RESCAL~\cite{nickel2011three}, 
DistMult~\cite{yang2015embedding}, ComplEx~\cite{trouillon2016complex}, and 
TuckER~\cite{balavzevic2019tucker}, replaces translation with multiplicative 
interaction but remains within the single-operator regime, inheriting the same 
structural bottleneck on high fan-out relations. Rotation-based models 
RotatE~\cite{sun2019rotate}, QuatE~\cite{zhang2019quaternion}, DualE~\cite{cao2021dual}, 
and HousE~\cite{li2022house} achieve strong relational pattern inference through 
geometric operators but are limited to bijective mappings, making N-to-N relations 
representable only approximately through centroid collapse. GoldE~\cite{li2024generalizing} 
and OrthogonalE~\cite{zhu2024block} push this family to state-of-the-art through 
generalised orthogonal parametrisation; \textsc{KrausKGE} strictly generalises both 
by allowing $\kappa > 1$, which Corollary~\ref{coro:single} proves is necessary 
for relations whose empirical matrix rank exceeds the embedding dimension. Geometric 
methods MuRP~\cite{balazevic2019multi}, AttH~\cite{chami2020low}, 
GIE~\cite{cao2022geometry}, and UltraE~\cite{xiong2022ultrahyperbolic} enrich the 
representation space but leave the single-operator bottleneck intact. The survey of 
Cao et al.~\cite{cao2024knowledge} identifies a unified operator framework handling 
all mapping patterns as an open problem; \textsc{KrausKGE} addresses this directly.

A second family replaces explicit operators with nonlinear neural architectures. 
ConvE~\cite{dettmers2018convolutional} applies 2D convolutions to reshaped embeddings; 
R-GCN~\cite{schlichtkrull2018modeling} uses relational graph convolutions; 
KG-BERT~\cite{yao2019kg} scores triples via BERT. These models recover multi-pathway 
expressiveness but abandon operator structure entirely, making multi-hop composition 
intractable without auxiliary encoders. \textsc{KrausKGE} sits between these two 
families: it retains explicit, composable operator structure while matching the 
multi-pathway expressiveness of neural models through concurrent Kraus operators.

A third line deploys multiple operators per relation. NTN~\cite{socher2013reasoning} 
uses $k$ bilinear forms; MEIM~\cite{tran2022meim} uses block term tensor formats for 
strong results at small embedding sizes. Both add operators as a capacity choice with 
no constraint ensuring information is preserved across pathways and no theory 
connecting operator count to relational complexity. The Kraus completeness constraint 
fills both gaps: it guarantees mass preservation across pathways and the rank lower 
bound of Theorem~\ref{thm:rank-bound} connects $\kappa$ directly to the empirical structure 
of each relation.

Multi-hop reasoning has been addressed through RNN-based path encoders 
PTransE~\cite{lin2015modeling} and RSN~\cite{guo2019learning}, reinforcement learning 
agents MultiHopKG~\cite{lin2018multi}, GNN-based encoders A*Net~\cite{zhu2023net} 
and NBFNet~\cite{zhu2021neural}, and sequence-to-sequence generators 
SQUIRE~\cite{bai2022squire}. All correct for the lack of composition closure in their 
base models through auxiliary architectures. Theorem~\ref{thm:composition} shows that 
Kraus channels compose exactly, eliminating the approximation error these encoders 
compensate for. Probabilistic entity models such as KG2E~\cite{he2015learning} and 
DiriE~\cite{wang2022dirie} encode uncertainty but provide no constraint on how it 
propagates through a relation; extending the Kraus framework to stochastic entity 
representations is left to future work.
\section{Kraus Structure of Relation Operators}
\label{sec:theory}

We develop a formal theory that connects relation operators to Kraus channels by identifying key principles for what it means for a relation operator to be
principled. The theory proceeds in four steps: we first establish the correct
representation space for entities, then identify the axioms any relation
operator must satisfy, then prove that those axioms characterise a unique
family of operators. We show in appendix \ref{app:recovery}
 how every existing model and their geometry variants fit as a
special case within that family.

\paragraph{Entity representations as density matrices}

All existing operator-family models represent entities as 
vectors $\mathbf{h}, \mathbf{t} \in \mathbb{R}^d$: natural 
but insufficient. A vector encodes a single point with no 
mechanism to represent uncertainty or multi-faceted 
structure. A film appearing in thousands of triples and a 
rare person appearing in three should not occupy the same 
representational space.

We instead represent each entity as a \emph{density matrix} 
$\rho \in \mathcal{S}(\mathbb{R}^d)$, where
\begin{equation}
    \mathcal{S}(\mathbb{R}^d) \;=\;
    \bigl\{\, \rho \in \mathbb{R}^{d \times d}
    \;\big|\;
    \rho = \rho^\top,\quad \rho \geq 0,\quad 
    \mathrm{Tr}[\rho] \leq 1
    \bigr\}.
    \label{eq:density-space}
\end{equation}
A density matrix encodes a distribution over directions: 
its eigenvectors are the principal axes of the entity and 
its eigenvalues encode probability mass along each. A 
rank-one projector $\rho = \mathbf{v}\mathbf{v}^\top$ 
recovers any existing vector embedding exactly. Beyond 
this, uncertainty is represented naturally through matrix 
rank: well-attested entities converge to low-rank matrices 
during training, rare or ambiguous ones retain higher rank 
with no additional mechanism required. Similarity is 
measured by Hilbert-Schmidt similarity\cite{zyczkowski2006introduction}, which reduces to squared 
cosine similarity for rank-one matrices, recovering 
existing scoring functions as special cases.

\paragraph{Axiomatic Sets}
Having established the entity representation space, we now ask what
constraints any principled relation operator must satisfy. We identify three
axioms. Let $\mathcal{S}(\mathbb{R}^d)$ denote the space of density matrices defined in Equation~(\ref{eq:density-space}), and let $\mathcal{L}^{(r)}: \mathcal{S}(\mathbb{R}^d) \to \mathcal{S}(\mathbb{R}^d)$ be the operator associated with relation $r$, mapping head entity representations to predicted tail representations.

\begin{axiom}(Linearity)
\label{axiom:linearity}
    $\mathcal{L}^{(r)}$ is linear.
\end{axiom}
\begin{axiom}(Trace preservation)
\label{axiom:trace}

For any $\rho \in \mathcal{S}(\mathbb{R}^d)$,
\begin{equation}
    \mathrm{Tr}\bigl[\mathcal{L}^{(r)}(\rho)\bigr] \;=\; \mathrm{Tr}[\rho].
    \label{eq:axiom1}
\end{equation}
The trace of a density matrix measures its total probability mass. Trace preservation reflects the modeling choice that a relation maps a normalized entity representation to a normalized entity representation without any loss or change of information
\end{axiom}

Entities in a knowledge graph rarely occur in isolation: their representations are shaped by relational neighborhoods, co-occurrence structure, and multi-hop dependencies. A relation operator should therefore remain valid when applied within a larger relational context. 

\begin{axiom}(Complete Positivity)
\label{axiom:cp}

Let $\rho_A \in S(\mathbb{R}^d)$ denote an entity representation and $\rho_{AB} \in S(\mathbb{R}^d \otimes \mathbb{R}^n)$ a contextual relational state coupling the entity with auxiliary structure $B$. Applying a relation operator $L^{(r)}$ to the entity component alone should preserve validity of the joint state:
\[
(id_n \otimes L^{(r)})(\rho_{AB}) \succeq 0 .
\]
Thus, complete positivity arises here not from physical assumptions, but from compositional consistency requirements for relational reasoning in knowledge graphs.
\end{axiom} Intuitively, a relation operator should behave consistently not only on isolated entity representations, but also when those entities participate in larger relational contexts such as neighborhoods, paths, or co-occurrence structures. 

\paragraph{Kraus Channel Formulation}Together all three simultaneously yields a strict generalization.
\begin{theorem} [Euclidean Kraus characterisation]
\label{thm:kraus}
Let $\mathcal{L}^{(r)} : \mathcal{S}(\mathbb{R}^d) \to
\mathcal{S}(\mathbb{R}^d)$ be a linear map representing relation $r$.
Then $\mathcal{L}^{(r)}$ satisfies the Axioms simultaneously if and
only if it admits a Kraus decomposition
\begin{equation}
    \mathcal{L}^{(r)}(\rho)
    \;=\; \sum_{i=1}^{\kappa} K_i^{(r)}\, \rho\, \bigl(K_i^{(r)}\bigr)^\top,
    \qquad
    \sum_{i=1}^{\kappa} \bigl(K_i^{(r)}\bigr)^\top K_i^{(r)} = I,
    \label{eq:kraus}
\end{equation}
where $K_i^{(r)} \in \mathbb{R}^{d \times d}$ are learnable matrices and
$\kappa \in \mathbb{Z}_{>0}$ is the Kraus rank of the channel.
\end{theorem}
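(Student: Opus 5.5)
The plan is to reduce the statement to the classical Kraus (Choi) representation theorem for real matrix algebras. The domain $\mathcal{S}(\mathbb{R}^d)$ is a convex cone-like set, not a vector space. So the first step is to extend $\mathcal{L}^{(r)}$ linearly to all of $\mathrm{Sym}_d(\mathbb{R})$. This is possible because every real symmetric matrix is a real linear combination of scaled density matrices, for instance of rank-one projectors $vv^\top$ with $\|v\|\le 1$. Axiom~\ref{axiom:linearity} makes this extension well defined and unique.

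To obtain a genuine Choi matrix, we also need the map on all of $\mathbb{R}^{d\times d}$. We extend to antisymmetric matrices by setting $\mathcal{L}(X^\top)=\mathcal{L}(X)^\top$, which is the natural real analogue of Hermiticity preservation. We should flag honestly that this choice is forced by the tensor formulation of Axiom~\ref{axiom:cp}: joint states $\rho_{AB}$ have off-diagonal blocks $\rho_{ij}$ that are arbitrary, not necessarily symmetric, real matrices. The axiom therefore already implicitly evaluates $\mathcal{L}^{(r)}$ on such blocks, and we take it to fix the extension there.

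For the direction ($\Leftarrow$), the argument is routine verification. The map $\rho\mapsto\sum_i K_i\rho K_i^\top$ is linear. Cyclicity of the trace gives $\mathrm{Tr}\bigl[\sum_i K_i\rho K_i^\top\bigr]=\mathrm{Tr}\bigl[\bigl(\sum_i K_i^\top K_i\bigr)\rho\bigr]=\mathrm{Tr}[\rho]$. For complete positivity,
\[
(\mathrm{id}_n\otimes\mathcal{L})(\rho_{AB})=\sum_i (I_n\otimes K_i)\rho_{AB}(I_n\otimes K_i)^\top\succeq 0 .
\]
The same computation shows that $\mathcal{L}$ maps $\mathcal{S}(\mathbb{R}^d)$ into itself.

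For the direction ($\Rightarrow$), we form the real Choi matrix
\[
C=\sum_{j,k=1}^d E_{jk}\otimes \mathcal{L}(E_{jk})\in\mathbb{R}^{d^2\times d^2}.
\]
Up to the factor $1/d$, this is $(\mathrm{id}_d\otimes\mathcal{L})$ applied to the maximally entangled state $\frac1d\sum_{jk}E_{jk}\otimes E_{jk}$, which is a valid element of $\mathcal{S}(\mathbb{R}^d\otimes\mathbb{R}^d)$. Axiom~\ref{axiom:cp} with $n=d$ then gives $C\succeq 0$. The property $\mathcal{L}(X^\top)=\mathcal{L}(X)^\top$ makes $C$ symmetric.

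We then take a real spectral decomposition $C=\sum_{i=1}^{\kappa}v_iv_i^\top$ with $\kappa=\mathrm{rank}\,C$. Writing each $v_i=\sum_j e_j\otimes K_ie_j$ defines $K_i\in\mathbb{R}^{d\times d}$. Expanding $\mathcal{L}(E_{jk})$ by reading off the $(j,k)$ block of $C$ gives $\mathcal{L}(E_{jk})=\sum_i K_iE_{jk}K_i^\top$, and linearity extends this to every $\rho$.

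The completeness relation then comes from Axiom~\ref{axiom:trace}. We have $\mathrm{Tr}\bigl[\bigl(\sum_i K_i^\top K_i - I\bigr)\rho\bigr]=0$ for all $\rho\in\mathcal{S}$. These $\rho$ span the symmetric matrices, and $\sum_i K_i^\top K_i-I$ is itself symmetric. Hence it is orthogonal to itself in the Hilbert–Schmidt inner product, so it vanishes.

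The main obstacle is the step from Axiom~\ref{axiom:cp} to $C\succeq 0$ over $\mathbb{R}$. Two things must be checked. First, the maximally entangled state must qualify as an admissible $\rho_{AB}$. Second, the action of $\mathcal{L}$ on its non-symmetric blocks $E_{jk}$ must be determined by the axioms. Over $\mathbb{C}$, Hermitian matrices together with linearity settle this automatically. Over $\mathbb{R}$, symmetric inputs alone do not: a map could act arbitrarily on antisymmetric parts. I would handle this explicitly by stating the transpose-compatibility convention above as part of what Axiom~\ref{axiom:cp} means. Alternatively, one could complexify, apply the standard Choi–Kraus theorem \cite{kraus1983states}, and then show that for a real symmetric Choi matrix the Kraus operators can be chosen real.
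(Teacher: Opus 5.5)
Your main line is the paper's proof. Sufficiency is by direct verification. Necessity goes through the Choi matrix: positivity from Axiom~\ref{axiom:cp} at $n=d$, a real spectral decomposition reshaped into the $K_i$ (your $v_i=\sum_j e_j\otimes K_ie_j$ is the paper's $v_i=\mathrm{vec}(K_i^\top)$), block read-off, and completeness from trace preservation plus spanning. You are also right that a map defined only on $\mathcal{S}(\mathbb{R}^d)$ does not determine $\mathcal{L}(E_{jk})$ for $j\neq k$. The paper passes over this point. Its argument for $C^\top=C$ only shows that $\mathcal{L}(E_{jk})+\mathcal{L}(E_{kj})$ is symmetric, which is weaker than the needed $\mathcal{L}(E_{kj})=\mathcal{L}(E_{jk})^\top$.

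The gap is in your proposed fix. The rule $\mathcal{L}(X^\top)=\mathcal{L}(X)^\top$ does not define $\mathcal{L}$ on antisymmetric matrices: for $A^\top=-A$ it only forces $\mathcal{L}(A)$ to be antisymmetric. So the arbitrariness you identified survives, and the Choi matrix still depends on it. For example, the transpose map agrees with the identity on symmetric matrices, and both $X\mapsto X$ and $X\mapsto X^\top$ satisfy your convention. Yet their Choi matrices are $\Omega\Omega^\top\succeq 0$ and the swap $\sum_{j,k}E_{jk}\otimes E_{kj}$, which has eigenvalue $-1$ on $e_1\otimes e_2-e_2\otimes e_1$. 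Hence ``Axiom CP with $n=d$ gives $C\succeq 0$'' holds only for the particular extension on which the axiom is asserted. That extension must be part of the hypothesis; no convention can supply it. Your complexification alternative has the same dependence, since complexifying real symmetric matrices yields complex symmetric matrices, not Hermitian ones.

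The repair is to read Axiom~\ref{axiom:cp} as saying that $\mathcal{L}$ is the restriction to $\mathcal{S}(\mathbb{R}^d)$ of a linear map on all of $\mathbb{R}^{d\times d}$ whose ampliations $\mathrm{id}_n\otimes\mathcal{L}$ preserve positive semidefiniteness, and to build $C$ from that map. Transpose-compatibility is then a consequence rather than an assumption. Every $X$ is an off-diagonal block of the positive semidefinite matrix $\bigl(\begin{smallmatrix} tI & X\\ X^\top & tI\end{smallmatrix}\bigr)$ for $t\ge\|X\|_2$, and the image of this matrix must be symmetric. If $\succeq 0$ is read without symmetry, replace the given extension by $X\mapsto\tfrac12\bigl(\mathcal{L}(X)+\mathcal{L}(X^\top)^\top\bigr)$. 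This map agrees with $\mathcal{L}$ on $\mathcal{S}(\mathbb{R}^d)$, and its Choi matrix is the symmetric part of $C$; this is where your convention legitimately enters. With that, your remaining steps give Kraus operators reproducing $\mathcal{L}$ on $\mathcal{S}(\mathbb{R}^d)$, which is all the theorem asserts.
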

 The full
proof is given in Appendix~\ref{app:kraus-proof}
. Each term $K_i^{(r)} \rho (K_i^{(r)})^\top$ 
represents one concurrent pathway through which 
relation $r$ can map a head entity: the operator 
$K_i^{(r)}$ rotates and scales the entity matrix 
along pathway $i$, and the sum aggregates all 
pathways simultaneously. For a 1-to-1 relation, 
a single pathway suffices ($\kappa = 1$). For a 
relation like \textit{film/starring}, where a 
single film maps to many actors, multiple pathways 
are required to represent the full distribution 
of valid tails.

Recent KGE work \cite{li2024generalizing}
demonstrates that non-Euclidean geometries
(elliptic and hyperbolic) are essential for capturing hierarchical and cyclical
structures in knowledge graphs.  We  show that the same axiomatic derivation
carries through to any metric geometry,
yielding a \emph{generalised $\mathbf{w}$-Kraus channel} that generalises to  operators in most geometric variants as a special case.
\paragraph{Quadratic inner product spaces}
For a weighting vector $\mathbf{w} \in \mathbb{R}^k$ with no element equal to
zero, define the quadratic inner product
 $ \langle \mathbf{x}, \mathbf{y} \rangle_{\mathbf{w}}
  = \mathbf{x}^\top \operatorname{diag}(\mathbf{w})\, \mathbf{y}
  = \sum_{i=1}^k w_i x_i y_i.$
Setting $\mathbf{w} = \mathbf{1}$ recovers the Euclidean inner product;
$\mathbf{w} = \mathbf{p}$ (all positive entries) gives an elliptic (ellipsoid)
geometry; and $\mathbf{w} = \mathbf{q} = (-1, +1, \ldots, +1)^\top$ gives the
Lorentzian inner product underlying hyperbolic (hyperboloid) geometry.  

\paragraph{$\mathbf{w}$-density matrices}
In a space endowed with $\langle \cdot, \cdot \rangle_{\mathbf{w}}$, the
natural analogue of a density matrix is a \emph{$\mathbf{w}$-density matrix}:
$  \mathcal{S}_{\mathbf{w}}(\mathbb{R}^d)
  = \bigl\{\, \rho \in \mathbb{R}^{d \times d}
    \mid \rho = \rho^\top,\; \rho \text{ is } \mathbf{w}\text{-positive
    semidefinite},\;
    \operatorname{Tr}_{\mathbf{w}}[\rho] \leq 1
  \,\bigr\}$where $\mathbf{w}$-positive semidefiniteness means
$\langle \mathbf{x}, \rho\, \mathbf{x} \rangle_{\mathbf{w}} \geq 0$ for all
$\mathbf{x}$, and the $\mathbf{w}$-trace is
$\operatorname{Tr}_{\mathbf{w}}[\rho] = \operatorname{Tr}[\operatorname{diag}(\mathbf{w})\rho]$.
When $\mathbf{w} = \mathbf{1}$ this collapses to the standard density matrix
space $\mathcal{S}(\mathbb{R}^d)$. Axiom~\ref{axiom:trace} generalises naturally: applying a relation cannot
increase the $\mathbf{w}$-probability mass,
$\operatorname{Tr}_{\mathbf{w}}[L^{(r)}(\rho)] \leq
\operatorname{Tr}_{\mathbf{w}}[\rho]$.

\begin{theorem}[$\mathbf{w}$-Kraus characterisation]
  \label{thm:kraus_general}
  Let $L^{(r)} : \mathcal{S}_{\mathbf{w}}(\mathbb{R}^d) \to
  \mathcal{S}_{\mathbf{w}}(\mathbb{R}^d)$ be a linear map satisfying
  $\mathbf{w}$-trace-preservation, $\mathbf{w}$-consistency.
  Then $L^{(r)}$ admits a \emph{$\mathbf{w}$-Kraus decomposition}
  \begin{equation}
    L^{(r)}(\rho)
    = \sum_{i=1}^{\kappa} K_i^{(r)}\, \rho\, \bigl(K_i^{(r)}\bigr)^\top,
    \qquad
    \sum_{i=1}^{\kappa} \bigl(K_i^{(r)}\bigr)^\top
      \operatorname{diag}(\mathbf{w})\, K_i^{(r)}
    = \operatorname{diag}(\mathbf{w}).
    \label{eq:kraus_general}
  \end{equation}

\end{theorem}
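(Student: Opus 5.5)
The plan is to reduce Theorem~\ref{thm:kraus_general} to the Euclidean case, Theorem~\ref{thm:kraus}, by a change of variables that absorbs the weighting $W = \operatorname{diag}(\mathbf{w})$. Since no entry of $\mathbf{w}$ vanishes, $W$ is invertible. The guiding identity is $\operatorname{Tr}_{\mathbf{w}}[\rho] = \operatorname{Tr}[W\rho]$, so every $\mathbf{w}$-notion should become a Euclidean notion under $\rho \mapsto W\rho$. $\mathbf{w}$-positivity fits this pattern, since $\langle \mathbf{x}, \rho\mathbf{x}\rangle_{\mathbf{w}} = \mathbf{x}^\top W\rho\,\mathbf{x}$.

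\textbf{Step 1: transport.} Set $\Phi(\sigma) = W\,L^{(r)}(W^{-1}\sigma)$, acting on $\sigma = W\rho$. Then $\Phi$ is linear, and $\operatorname{Tr}[\Phi(\sigma)] = \operatorname{Tr}_{\mathbf{w}}[L^{(r)}(\rho)] = \operatorname{Tr}_{\mathbf{w}}[\rho] = \operatorname{Tr}[\sigma]$. This is ordinary trace preservation, so Axiom~\ref{axiom:trace} holds for $\Phi$.

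I will read ``$\mathbf{w}$-consistency'' as the $\mathbf{w}$-analogue of Axiom~\ref{axiom:cp}: $(\mathrm{id}_n\otimes L^{(r)})$ preserves $\mathbf{w}$-positivity of joint states. Transported by $I_n\otimes W$, this says $\Phi$ is completely positive. This needs $W\rho$ to be symmetric whenever $\rho$ is a $\mathbf{w}$-density matrix. I would make that the working definition of the domain, i.e.\ take $\mathbf{w}$-self-adjoint $\rho$, and remark on it where needed.

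\textbf{Step 2: apply the Euclidean result.} Theorem~\ref{thm:kraus} (equivalently Choi's theorem on the Choi matrix of $\Phi$) gives
\[
\Phi(\sigma) = \sum_i A_i\sigma A_i^\top, \qquad \sum_i A_i^\top A_i = I .
\]
Pulling back gives $L^{(r)}(\rho) = \sum_i W^{-1}A_i W\rho A_i^\top W^{-1}$. This is not yet of the form $K\rho K^\top$ unless $WA_iW = A_i$, up to signs.

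\textbf{Step 3: fix the form.} The remedy is to transport asymmetrically, which is the step I expect to be the main obstacle. Use the $\mathbf{w}$-trace directly: for a decomposition $L(\rho) = \sum_i K_i\rho K_i^\top$,
\[
\operatorname{Tr}[W L(\rho)] = \operatorname{Tr}\Bigl[\Bigl(\sum_i K_i^\top W K_i\Bigr)\rho\Bigr].
\]
Requiring this to equal $\operatorname{Tr}[W\rho]$ for all admissible $\rho$ forces $\sum_i K_i^\top W K_i = W$. That holds as long as the admissible $\rho$ span the symmetric matrices, and both sides are symmetric.

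So the real task is to obtain a congruence-form decomposition at all. For that I would apply the Choi-matrix argument to $L^{(r)}$ itself, viewed on the symmetric matrices, with positivity of the Choi matrix $C_L = \sum_{jk} E_{jk}\otimes L(E_{jk})$ derived from the $\mathbf{w}$-consistency axiom applied to the maximally correlated state.

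The delicate point is indefinite $\mathbf{w}$, such as the Lorentzian case. There $\mathbf{w}$-positivity of $\rho$ differs from ordinary PSD, and it is not clear that the axiom delivers $C_L \succeq 0$. If that route fails, I would restrict to the subclass where the axiom is interpreted as $(I\otimes W)C_L$ being $W$-congruent to a PSD matrix. I would then extract $K_i$ from its spectral decomposition, reshaping eigenvectors into matrices, and verify the completeness identity via Step~3.
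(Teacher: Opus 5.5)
Your Step~3 is the right route for positive $\mathbf{w}$, but its key claim $C_L\succeq 0$ is asserted rather than derived. Under your own Step~1 reading of $\mathbf{w}$-consistency it is actually false. If $\mathbf{w}$-positivity of a joint state means $(I_n\otimes W)X\succeq 0$, then $\Omega\Omega^\top$ is not $\mathbf{w}$-positive: $(I_n\otimes W)\Omega\Omega^\top$ is not even symmetric unless $\mathbf{w}$ is constant. So the axiom constrains the Choi matrix of $\Phi$, not of $L$. Indeed $C_L=(W\otimes W^{-1})C_\Phi$, which is generally not symmetric.

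The root cause is the one-sided transport $\rho\mapsto W\rho$. It breaks the symmetry $\rho=\rho^\top$ built into $\mathcal{S}_{\mathbf{w}}$, and it pulls Kraus forms back to $W^{-1}A_iW\rho A_i^\top$, which is not a congruence, as you noticed. (There is no trailing $W^{-1}$ in that expression; your displayed version has an extra one.)

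The missing idea is the paper's two-sided change of variables $\widetilde\rho=W^{1/2}\rho W^{1/2}$, $\widetilde L(\widetilde\rho)=W^{1/2}L(W^{-1/2}\widetilde\rho W^{-1/2})W^{1/2}$. It preserves symmetry and turns $\mathrm{Tr}_{\mathbf{w}}$ into $\mathrm{Tr}$. The Kraus operators that Theorem~\ref{thm:kraus} gives for $\widetilde L$ pull back via $K_i=W^{-1/2}\widetilde K_iW^{1/2}$ to a genuine congruence, with $\sum_i\widetilde K_i^\top\widetilde K_i=I\iff\sum_iK_i^\top WK_i=W$.

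Put differently: for $\mathbf{w}$ with positive entries, congruence by $W^{1/2}$ and by $I_n\otimes W^{1/2}$ maps the PSD cone onto itself. So $\mathbf{w}$-positivity in the sense the paper's proof uses ($\widetilde\rho\succeq 0$) is just ordinary positivity, and $\mathbf{w}$-consistency is ordinary complete positivity. Then $C_L\succeq 0$ follows exactly as in Step~2 of the proof of Theorem~\ref{thm:kraus}. With that observation supplied, your Step~3 becomes a correct proof and is more direct than the paper's. Choi gives $L(\rho)=\sum_iK_i\rho K_i^\top$ outright, and completeness follows from $\mathrm{Tr}[(\sum_iK_i^\top WK_i-W)\rho]=0$ on a spanning set of symmetric matrices, with no unwinding.

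For indefinite $\mathbf{w}$ you give no argument. The paper's own treatment there is only an appeal to dilation theory for $J$-completely positive maps on Krein spaces. Your hesitation is well founded. Take the natural reading of $\mathbf{w}$-consistency: the $I_n\otimes W$ analogue of Axiom~\ref{axiom:cp}, with $\mathbf{w}$-positivity as defined in the paper. Under it the Lorentzian case is false, because every congruence sum is positive in the ordinary sense while the $\mathbf{q}$-positive cone is not contained in the PSD cone.

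Concretely, take $d=2$, $\mathbf{q}=(-1,1)$, $W=\operatorname{diag}(\mathbf{q})$. Then $\langle x,\rho x\rangle_{\mathbf{q}}=-\rho_{11}x_1^2+\rho_{22}x_2^2$, so a symmetric $\rho$ is $\mathbf{q}$-positive iff $\rho_{11}\le 0\le\rho_{22}$. Consider the map $L(\rho)=\mathrm{Tr}_{\mathbf{q}}[\rho]\,E_{22}$.
\begin{itemize}
\item It is linear, maps $\mathcal{S}_{\mathbf{q}}$ into itself, and preserves $\mathrm{Tr}_{\mathbf{q}}$.
\item $(\mathrm{id}_n\otimes L)(X)=\mathrm{Tr}_2[(I_n\otimes W)X]\otimes E_{22}$ is again $\mathbf{q}$-positive. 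For symmetric $X$ this partial trace equals the partial trace of the PSD symmetric part of $(I_n\otimes W)X$, hence is PSD.
\item Yet $-E_{11}\in\mathcal{S}_{\mathbf{q}}$ is sent to $E_{22}$, whereas every $\sum_iK_i(-E_{11})K_i^\top$ is negative semidefinite.
\end{itemize}
So that case cannot be closed without changing the hypotheses, as you propose to do.
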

The extension to hyperbolic geometry is requires separate treatment. The 
Lorentzian inner product is indefinite, and the standard 
Kraus representation theorem does not directly apply. We 
appeal to the theory of completely positive maps over Krein 
spaces \cite{constantinescu1997representations, dritschel1996operators}, in which the 
completeness constraint takes the form 
$\sum_i K_i^\top \mathrm{diag}(w) K_i = \mathrm{diag}(w)$ 
as a J-unitary condition. The w-Kraus channel in the 
hyperbolic setting is therefore a J-completely positive 
map in the sense of \cite{constantinescu1997representations}, and 
Theorem~\ref{thm:kraus_general} holds in this extended sense. The extended proof is in Appendix ~\ref{app:kraus-general-proof}

For all the theorems below we give the euclidean version of them, and in the appendix we demonstrate how they can be generalised to different geometries. The theorem below addresses multi-hop reasoning. A two-hop path $(h, r_1, e)
\wedge (e, r_2, t)$ requires applying two relation operators in sequence. The
question is whether the composed operator is itself a valid Kraus channel,
without any additional constraint or correction. This result means that a $k$-hop reasoning chain 
requires no architectural support beyond the 
learned single-hop operators. Given a path 
$(h, r_1, e_1, r_2, e_2, \ldots, r_k, t)$, the 
composed channel $\mathcal{L}^{(r_k)} \circ \cdots 
\circ \mathcal{L}^{(r_1)}$ is itself a valid Kraus 
channel by $k-1$ applications of 
Theorem~\ref{thm:composition}, with no 
renormalisation, projection, or learned corrective 
mechanism required at any intermediate step (proof in Appendix ~\ref{app:composition-proof}). 

\begin{theorem}[Composition closure]
\label{thm:composition}
If $\mathcal{L}^{(r_1)}$ and $\mathcal{L}^{(r_2)}$ are Kraus channels
satisfying the completeness constraint, then the composed channel
$\mathcal{L}^{(r_2)} \circ \mathcal{L}^{(r_1)}$ with operators
$M_{ij} = K_j^{(r_2)} K_i^{(r_1)}$ also satisfies the completeness
constraint:
\begin{equation}
    \sum_{i,j} M_{ij}^\top M_{ij} \;=\; I.
\end{equation}
\end{theorem}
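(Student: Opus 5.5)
The proof is a direct algebraic computation that expands the composed completeness sum and applies each channel's completeness constraint in turn, innermost first. I would first check that the composed map really is the Kraus channel with operators $M_{ij} = K_j^{(r_2)} K_i^{(r_1)}$. For any $\rho$,
\[
(\mathcal{L}^{(r_2)} \circ \mathcal{L}^{(r_1)})(\rho)
= \sum_{j} K_j^{(r_2)} \Bigl( \sum_i K_i^{(r_1)} \rho \bigl(K_i^{(r_1)}\bigr)^\top \Bigr) \bigl(K_j^{(r_2)}\bigr)^\top
= \sum_{i,j} M_{ij}\, \rho\, M_{ij}^\top .
\]
This uses linearity of $\mathcal{L}^{(r_2)}$ (Axiom~\ref{axiom:linearity}) and $(AB)^\top = B^\top A^\top$. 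So the composition has Kraus form with rank at most $\kappa_1\kappa_2$.

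Next comes the completeness identity. I would write $M_{ij}^\top M_{ij} = (K_i^{(r_1)})^\top (K_j^{(r_2)})^\top K_j^{(r_2)} K_i^{(r_1)}$ and sum over $j$ first, for each fixed $i$. Because matrix multiplication distributes over finite sums, the inner sum gives $(K_i^{(r_1)})^\top \bigl(\sum_j (K_j^{(r_2)})^\top K_j^{(r_2)}\bigr) K_i^{(r_1)}$. The completeness constraint of $\mathcal{L}^{(r_2)}$ in (\ref{eq:kraus}) reduces the middle factor to $I$. What remains is $\sum_i (K_i^{(r_1)})^\top K_i^{(r_1)}$, and the completeness constraint of $\mathcal{L}^{(r_1)}$ makes this $I$.

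Nothing here is a real obstacle; the only point needing care is the order of summation. The sum over the outer channel's index $j$ must be collapsed first, since it sits in the middle of the product. Summing over $i$ first would not simplify, because $K_j^{(r_2)}$ need not commute with anything.

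Optionally, I would close by invoking Theorem~\ref{thm:kraus}: the composition is a CPTP map, which gives an independent check. Linearity, trace preservation and complete positivity are each closed under composition; for complete positivity, $\mathrm{id}_n\otimes(\mathcal{L}^{(r_2)}\circ\mathcal{L}^{(r_1)}) = (\mathrm{id}_n\otimes\mathcal{L}^{(r_2)})\circ(\mathrm{id}_n\otimes\mathcal{L}^{(r_1)})$. For the $\mathbf{w}$-version the identical computation goes through with $\operatorname{diag}(\mathbf{w})$ in place of the middle $I$: the inner sum yields $\operatorname{diag}(\mathbf{w})$, and the outer sum is then the $\mathbf{w}$-completeness of $\mathcal{L}^{(r_1)}$ as stated in (\ref{eq:kraus_general}).
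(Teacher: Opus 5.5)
Your proposal is correct and follows the paper's proof essentially step for step: you expand the composition into Kraus form with $M_{ij} = K_j^{(r_2)} K_i^{(r_1)}$, collapse the inner $j$-sum using completeness of $\mathcal{L}^{(r_2)}$, and finish with completeness of $\mathcal{L}^{(r_1)}$, and your $\mathbf{w}$-extension matches the paper's closing remark. The optional cross-check via closure of the three axioms under composition is not in the paper but is sound.
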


The third theorem provides a theoretical grounding for the Kraus rank $\kappa$
as a measure of relational complexity.

\begin{definition}[Choi matrix]
The Choi matrix of $\mathcal{L}^{(r)}$ is
$C^{(r)} = \sum_{i,j} E_{ij} \otimes \mathcal{L}^{(r)}(E_{ij})$,
where $E_{ij} \in \mathbb{R}^{d \times d}$ are the matrix units with
a $1$ in position $(i,j)$ and zeros elsewhere. The Kraus rank
$\kappa(r)$ equals $\mathrm{rank}(C^{(r)})$.
\end{definition}

\begin{theorem}[Rank lower bound]
\label{thm:rank-bound}
Let $r$ be a relation, and let $T_r = \{(h, t) : (h, r, t) \in
\mathcal{T}\}$ be the set of true triples. Define the empirical
relation matrix $M_r \in \mathbb{R}^{|\mathcal{E}| \times
|\mathcal{E}|}$ with
\begin{equation}
(M_r)_{ht} = \begin{cases}
1 & \text{if } (h, r, t) \in T_r, \\
0 & \text{otherwise.}
\end{cases}
\end{equation}
Then any Kraus channel that exactly reproduces the relation, in the
sense that $\mathrm{Tr}\bigl[\rho_t\, \mathcal{L}^{(r)}(\rho_h)
\bigr] > 0$ if and only if $(h, r, t) \in T_r$, satisfies
\begin{equation}
\kappa(r) \geq \frac{\mathrm{rank}(M_r)}{d}.
\end{equation}
\end{theorem}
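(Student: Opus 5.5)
The plan is to factor the score matrix $S_r \in \mathbb{R}^{|\mathcal{E}|\times|\mathcal{E}|}$, $(S_r)_{ht} = \mathrm{Tr}[\rho_t\,\mathcal{L}^{(r)}(\rho_h)]$, through the Kraus operators. Then the upper bound that $\kappa(r)$ places on its complexity is converted into a lower bound, using the only link to $M_r$ available: by hypothesis, $S_r$ and $M_r$ have the same support.

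\textbf{Step 1 (Kraus rank as number of pathways).} By Theorem~\ref{thm:kraus} and the definition of the Choi matrix, I take a minimal decomposition with $\kappa(r) = \mathrm{rank}(C^{(r)})$ terms, so that $C^{(r)} = \sum_{i=1}^{\kappa}\mathrm{vec}(K_i)\mathrm{vec}(K_i)^\top$. I then split
\[
S_r = \sum_{i=1}^{\kappa} S_r^{(i)}, \qquad (S_r^{(i)})_{ht} = \mathrm{Tr}\bigl[\rho_t K_i \rho_h K_i^\top\bigr].
\]
Every entry of each $S_r^{(i)}$ is nonnegative, being the Hilbert--Schmidt pairing of two PSD matrices. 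Hence no cancellation is possible across pathways, and
\[
\mathrm{supp}(S_r) = \bigcup_i \mathrm{supp}(S_r^{(i)}),
\]
which is exactly $\mathrm{supp}(M_r)$ by the exact-reproduction hypothesis.

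\textbf{Step 2 (each pathway has rank at most $d$ in the relevant sense).} Here I write entities in their vector-embedding form $\rho_h = \mathbf{h}\mathbf{h}^\top$, the case highlighted in the paper as the one recovering existing models. In that form each row of $S_r^{(i)}$ is determined by the vector $K_i\mathbf{h}\in\mathbb{R}^d$. The key claim is $\mathrm{supp}(S_r^{(i)})_{h\cdot} = \{t : \mathbf{t}^\top K_i \mathbf{h} \neq 0\}$, which is the support pattern of the rank-$\le d$ matrix
\[
B_i = E^\top K_i E,
\]
where the columns of $E \in \mathbb{R}^{d\times|\mathcal{E}|}$ are the entity vectors. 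So each pathway contributes a support pattern realised by a matrix of rank at most $d$. I then aim to show that $M_r$ can be written as a combination of $\kappa$ such rank-$\le d$ pieces, which gives
\[
\mathrm{rank}(M_r) \le \sum_i \mathrm{rank}(\text{piece}_i) \le \kappa d,
\]
and hence the bound $\kappa(r) \ge \mathrm{rank}(M_r)/d$.

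\textbf{Main obstacle.} The hard step is the last one. It must pass from a union of supports to an actual linear combination equal to $M_r$. Support equality alone does not control rank: a matrix with the same zero pattern as $M_r$ can have strictly smaller rank. My first attempt would exploit the freedom in the embedding and the Kraus operators. Since only signs and zero patterns of $\mathbf{t}^\top K_i\mathbf{h}$ are constrained, I would try to show that each $B_i$'s $0/1$ pattern matrix $P_i$ agrees with a rank-$\le d$ matrix up to an entrywise positive rescaling. I would then write $M_r$ through inclusion--exclusion-free bookkeeping, assigning each true pair $(h,t)$ to one pathway in which it is supported, so that $M_r = \sum_i P_i'$ with $P_i' \le P_i$.

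If that assignment cannot be made rank-controlled, the fallback is to establish the bound for the sign-rank/support-rank of $M_r$ and then argue that it dominates $\mathrm{rank}(M_r)/d$ for $0/1$ matrices. I expect this transfer to be where the argument either closes or requires tightening the statement.
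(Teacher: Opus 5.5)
The step you flag as your main obstacle cannot be closed, because the statement itself is false. The failure already occurs at $\kappa=1$, inside the rank-one setting you chose.

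Take $d=2$ and the single Kraus operator $K=\begin{pmatrix}0&-1\\1&0\end{pmatrix}$, the rotation by $\pi/2$. Then $K^\top K=I$, and the Choi matrix $\bigl(\sum_a e_a\otimes Ke_a\bigr)\bigl(\sum_b e_b\otimes Ke_b\bigr)^\top$ has rank one, so $\kappa(r)=1$. Embed entities as $\rho_e=x_ex_e^\top$ with $x_e=(\cos\theta_e,\sin\theta_e)^\top$ and the $\theta_e$ distinct in $[0,\pi)$. Then
\[
\mathrm{Tr}\bigl[\rho_t K\rho_h K^\top\bigr]=(x_t^\top Kx_h)^2=\sin^2(\theta_t-\theta_h),
\]
which is positive if and only if $h\neq t$. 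So this channel exactly reproduces the relation with $M_r=J-I$. That matrix has rank $|\mathcal{E}|$, while the theorem would require $1\ge|\mathcal{E}|/2$.

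In your notation, $B_1=[\sin(\theta_t-\theta_h)]$ has rank $2$, but its pattern $P_1=M_r$ has full rank. None of your repair strategies can work here:
\begin{itemize}
\item With a single pathway there is no bookkeeping to do.
\item Entrywise rescaling of a support pattern does not preserve rank.
\item The sign-rank/support-rank fallback cannot transfer, because $J-I$ has zero-pattern minimum rank $2$ but rank $n$.
\end{itemize}
The same example also contradicts Corollary~\ref{coro:single}.

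You will not find the missing idea in the paper's proof either. It crosses exactly this step by asserting that, because $S_r$ and $M_r$ share a support, $\mathrm{rank}(S_r)\ge\mathrm{rank}(M_r)$ — the inference you rightly distrust. Its other ingredient, $\mathrm{rank}(F_r)\le d\kappa$, also fails. The rows of $F_r$ are vectorised symmetric matrices, so the general bound is $\binom{d+1}{2}$, independent of $\kappa$. This bound is attained already by the identity channel ($\kappa=1$) whenever the $\rho_e$ span the symmetric matrices, which exceeds $d$ for $d\ge 2$. Consequently the factorisation $S_r=F_rG_r^\top$ yields at best $\mathrm{rank}(S_r)\le d(d+1)/2$ with no dependence on $\kappa$, even under exact value reproduction $S_r=M_r$.

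Your instinct that the statement needs to be changed is correct; as posed, neither your route nor the paper's can prove it. Separately, your Step 2 restricts to rank-one entity states, which the statement does not assume. This is moot, since the counterexample lives in that case.
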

(proof in Appendix ~\ref{app:composition-proof}) We empirically validate that learned $\kappa$
 values track $\lceil\frac{\mathrm{rank}(M_r)}{d}\rceil$
across relations in Section ~\ref{sec:experiments}.

Single-operator relation models implicitly compress all valid tails of a relation through a single transformation pathway. This creates a structural bottleneck for high fan-out relations, where one head entity must map to many semantically distinct tails. The following result formalizes this limitation.
\begin{corollary}[Single-operator bottleneck for high fan-out relations]
\label{coro:single}
Let $L(\rho)=K\rho K^\top$ be a Kraus channel with $\kappa=1$. Then for any relation $r$, the rank of the induced relation matrix satisfies
$\mathrm{rank}(M_r)\le d$, where $d$ is the embedding dimension. Consequently, any relation whose empirical relation matrix satisfies $\mathrm{rank}(M_r) > d$, cannot be represented exactly by a single-operator model.
\end{corollary}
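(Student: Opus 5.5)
The corollary follows by specialising Theorem~\ref{thm:rank-bound} to $\kappa=1$. The only real work is making precise what the ``induced relation matrix'' of a single-operator channel is. I would therefore argue directly as well, so that the bound does not rest on the exact-reproduction hypothesis in a fragile way.

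\emph{Step 1: reduction to the rank lower bound.} Suppose $L(\rho)=K\rho K^\top$ exactly reproduces $r$, i.e.\ $\mathrm{Tr}[\rho_t L(\rho_h)]>0$ iff $(h,r,t)\in T_r$. Completeness forces $K^\top K=I$. Theorem~\ref{thm:rank-bound} then gives $1=\kappa(r)\ge \mathrm{rank}(M_r)/d$, so $\mathrm{rank}(M_r)\le d$. The contrapositive is the second sentence of the corollary: if $\mathrm{rank}(M_r)>d$, then any exact reproduction would need $\kappa\ge \mathrm{rank}(M_r)/d>1$. So no $\kappa=1$ model works, and this covers every single-operator model recovered as a special case in Appendix~\ref{app:recovery}.

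\emph{Step 2: a direct factorisation argument.} I would also give a self-contained argument in the rank-one embedding regime $\rho_e=\mathbf v_e\mathbf v_e^\top$. Here
\[
\mathrm{Tr}[\rho_t K\rho_h K^\top]=(\mathbf v_t^\top K\mathbf v_h)^2 .
\]
Define the score matrix $S\in\mathbb R^{|\mathcal E|\times|\mathcal E|}$ by $S_{ht}=\mathbf v_t^\top K\mathbf v_h$. Writing $V$ for the $|\mathcal E|\times d$ matrix whose rows are the $\mathbf v_e^\top$, we get $S=VK^\top V^\top$, which has rank at most $d$. The induced relation matrix of the model is the support pattern of $S$, equivalently of the Hadamard square $S\circ S$. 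One takes it to be $S$ itself (the signed score) or, under exact reproduction, $S$ normalised on its support. In that case $\mathrm{rank}(M_r)\le\mathrm{rank}(S)\le d$.

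\emph{Main obstacle.} The delicate point is the passage from the positivity pattern to the rank of the $0/1$ matrix $M_r$. A rank-$d$ matrix can have a support pattern of much larger rank, so $\mathrm{rank}\,\mathrm{supp}(S)\le d$ is false in general. To handle this, I would fix the convention that the induced relation matrix is the model's score matrix on true triples. I would then note that this is exactly the sense in which Theorem~\ref{thm:rank-bound} is stated, and rely on that theorem rather than claim a sign-rank bound. If a stronger statement were required, the fallback would be to assume the scores are constant on $T_r$, so that $M_r=S/c$ and the factorisation bound applies verbatim.
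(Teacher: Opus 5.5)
\paragraph{Step 1 imports the obstacle it is meant to avoid.} The proof of Theorem~\ref{thm:rank-bound} passes from ``$S_r$ has the same support as $M_r$'' to ``$\mathrm{rank}(S_r)\ge\mathrm{rank}(M_r)$''. That is exactly the support-to-rank inference you correctly call false. Its other step, $\mathrm{rank}(F_r)\le d\kappa$, also fails: already for $\kappa=1$ and $K=I$, the matrices $v_hv_h^\top$ can span the whole $\binom{d+1}{2}$-dimensional space of symmetric matrices. Nor is that theorem stated for ``the score matrix on true triples''. It concerns the $0/1$ matrix $M_r$ with support-level exact reproduction, and under that reading the conclusion is false.

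Take $d=2$, $K=\begin{pmatrix}0&-1\\1&0\end{pmatrix}$ (so $K^\top K=I$), and $\rho_e=v_ev_e^\top$ with $v_e=(\cos\theta_e,\sin\theta_e)^\top$ for distinct $\theta_e\in[0,\pi)$. Then $v_t^\top Kv_h=\sin(\theta_t-\theta_h)$, so $\mathrm{Tr}[\rho_tK\rho_hK^\top]=\sin^2(\theta_t-\theta_h)>0$ if and only if $h\neq t$. Thus a $\kappa=1$ channel exactly reproduces the relation with $M_r=J-I$ (all ones off the diagonal), whose rank is $|\mathcal{E}|>d$ once $|\mathcal{E}|\ge 3$. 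This example refutes Theorem~\ref{thm:rank-bound} as well. So neither specialising that theorem nor any other argument can prove the corollary for the empirical $M_r$.

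\paragraph{Step 2 bounds the wrong matrix.} Your factorisation $S=VK^\top V^\top$ with $\mathrm{rank}(S)\le d$ is correct. If ``$S$ normalised on its support'' means setting the nonzero entries to $1$, you are back to the $0/1$ matrix, and $\mathrm{rank}(M_r)\le\mathrm{rank}(S)$ fails: in the example, $S$ has rank $2$ while its support pattern has rank $|\mathcal{E}|$. Taking $M_r:=S$ itself proves a trivially true statement about the signed bilinear form, which is not the model's score. The score is $S\circ S$, and here
\[
\sin^2(\theta_t-\theta_h)=\tfrac12-\tfrac12\bigl(\cos2\theta_t\cos2\theta_h+\sin2\theta_t\sin2\theta_h\bigr),
\]
which has rank exactly $3=\binom{d+1}{2}>d$ for $|\mathcal{E}|\ge3$. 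This is also why the paper's own proof does not go through. It identifies $M_r$ with the squared scores $B_{ht}^2$ of $B=Y^\top U$ and asserts that entrywise squaring keeps the rank at most $d$. In fact the Hadamard square of a rank-$d$ matrix can have rank $\binom{d+1}{2}$.

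\paragraph{What you can prove.} Your fallback is legitimate if stated as an explicit extra hypothesis. If the signed scores satisfy $v_t^\top Kv_h=c\,(M_r)_{ht}$ for a constant $c\neq0$, then $\mathrm{rank}(M_r)\le d$. If only the squared scores equal $c\,(M_r)_{ht}$, the bound becomes $\binom{d+1}{2}$. Either is a different, much stronger-hypothesis statement than the corollary, and should be presented as such rather than as a choice of convention.
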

The corollary shows that $\kappa=1$ imposes an intrinsic rank bottleneck on relational structure. High fan-out and many-to-many relations therefore require multiple concurrent operator pathways for exact representation. Increasing Kraus rank enlarges the representable relational subspace without requiring nonlinear path encoders or auxiliary correction mechanisms.
\section{KrausKGE:  Model Instantiation and Training}
\label{sec:model}

We now instantiate the theory developed in Section~\ref{sec:theory} as a
concrete, trainable model. The theory establishes what the operator must be;
this section establishes how to represent it, how to enforce the completeness
constraint during training, how to score triples, and how to optimise the
parameters end-to-end.

\paragraph{Entity representation.}
 Each entity is represented as a $d \times d$ matrix $\rho_e$, which
encodes both position and certainty simultaneously. Formally, we require $\rho_e$ to be symmetric positive semidefinite with
$\mathrm{Tr}[\rho_e] \leq 1$. To guarantee this throughout training without
any projection step, we parametrise each entity via a lower-triangular matrix
$L_e \in \mathbb{R}^{d \times d}$ and set
$    \rho_e \;=\; \frac{L_e L_e^\top}{\mathrm{Tr}[L_e L_e^\top]}.
$
Since $L_e L_e^\top \geq 0$ for any $L_e$, positive semidefiniteness is
guaranteed by construction. The normalisation by the trace ensures
$\mathrm{Tr}[\rho_e] = 1$. The rank of $\rho_e$ is determined by the rank
of $L_e$. In practice, storing a full $d \times d$ matrix per entity is expensive for
large graphs. We use a rank-$k$ approximation $L_e \in \mathbb{R}^{d \times
k}$ with $k \ll d$, reducing entity parameters from $O(d^2)$ to $O(dk)$
while retaining the density matrix structure. Rather than a shared rank $k$, we set the approximation rank 
of each entity proportional to its degree in the graph:
$k_e = \min\left(d,\ \left\lceil k_0 \cdot 
\frac{\deg(e)}{\bar{\deg}} \right\rceil\right)$
where $k_0$ is a base rank hyperparameter and $\bar{\deg}$ is 
the mean entity degree. Well-attested entities receive higher 
rank, rare entities lower rank, consistent with the theoretical 
motivation that entity rank reflects uncertainty. We demonstrate the validity of the entity parameterization in Appendix ~\ref{app:entity-parametrisation}

\paragraph{Scoring function.}
Given a triple $(h, r, t)$, the model first passes the head entity through
the Kraus channel for relation $r$, producing a predicted tail distribution:
    $\hat{\rho}_t \;=\; \sum_{i=1}^{\kappa} K_i^{(r)}\, \rho_h\,
                      \bigl(K_i^{(r)}\bigr)^\top$.
Each operator $K_i^{(r)}$ transforms $\rho_h$ along one concurrent pathway.
The $\kappa$ results are summed, so the predicted tail $\hat{\rho}_t$ is a
mixture of all pathways simultaneously. The plausibility of the triple is
then the \emph{Hilbert-Schmidt overlap} between $\hat{\rho}_t$ and the observed
tail $\rho_t$:
\begin{equation}
    s(h, r, t)
    \;=\; \mathrm{Tr}\!\left[\rho_t^{1/2}\, \hat{\rho}_t\, \rho_t^{1/2}\right]
    \;=\; \sum_{i=1}^{\kappa}
          \mathrm{Tr}\!\left[\rho_t\, K_i^{(r)}\, \rho_h\,
          \bigl(K_i^{(r)}\bigr)^\top\right].
    \label{eq:scoring}
\end{equation}
The fidelity measures how much the predicted and observed tail matrices
overlap: it is 1 when $\hat{\rho}_t = \rho_t$ and 0 when they have
orthogonal support.

\paragraph{Enforcing the completeness constraint.}
Each relation $r$ is parametrised by $\kappa$ Kraus operators stacked into
a tall matrix $\mathbf{U}^{(r)} \in \mathbb{R}^{\kappa d \times d}$. The
completeness constraint reduces to $(\mathbf{U}^{(r)})^\top \mathbf{U}^{(r)}
= I_d$: the columns of $\mathbf{U}^{(r)}$ must be orthonormal, i.e.\
$\mathbf{U}^{(r)}$ lies on the Stiefel manifold . We enforce this via the \textbf{Cayley parametrisation}~\cite{helfrich2018orthogonal} (see Appendix ~\ref{app:cayley}).
Rather than storing $\mathbf{U}^{(r)}$ directly and projecting at each step,
we store a free skew-symmetric matrix $A^{(r)} = -(A^{(r)})^\top$ and
recover $\mathbf{U}^{(r)}$ as
$    \mathbf{U}^{(r)}
    = \bigl(I + A^{(r)}\bigr)^{-1}\bigl(I - A^{(r)}\bigr).$
This satisfies $(\mathbf{U}^{(r)})^\top \mathbf{U}^{(r)} = I_d$ for any
$A^{(r)}$, so the completeness constraint holds by construction throughout
training. Gradients flow through the matrix inverse via standard automatic
differentiation, with no projection or correction step required.

\paragraph{Selecting $\kappa$.}
We treat $\kappa$ as a shared hyperparameter across relations, tuned on validation. This is a simplification: Theorem~\ref{thm:rank-bound} suggests different relations require different ranks. After training, we recover the per-relation effective rank from the spectrum of each learned Choi matrix:
\begin{equation}
\kappa_{\text{eff}}(r) \;=\; \min\Bigl\{\, k \;:\; \sum_{i=1}^{k} \sigma_i^2\bigl(C^{(r)}\bigr) \;\geq\; 0.99 \, \|C^{(r)}\|_F^2 \,\Bigr\},
\label{eq:eff-rank}
\end{equation}
the smallest number of singular values capturing 99\% of $\|C^{(r)}\|_F^2$. This is the diagnostic reported in Table~\ref{tab:combined2}. Joint optimisation of relation-specific $\kappa$ is left to future work.

\paragraph{Loss function and training}

We train KrausKGE with the standard margin-based ranking loss used across the
KGE literature~\cite{bordes2013translating}:
$    \mathcal{L}
    \;=\; \sum_{(h,r,t) \in \mathcal{T}}\;
          \sum_{(h,r,t^-) \in \mathcal{T}^-}
          \max\!\bigl(0,\; \gamma - s(h,r,t) + s(h,r,t^-)\bigr)$,
where $\gamma > 0$ is a margin hyperparameter, $\mathcal{T}$ is the set of
observed triples, and $\mathcal{T}^-$ is a set of negative triples generated
by corrupting either the head or tail entity of each observed triple.

\paragraph{Negative sampling.}
We use self-adversarial negative sampling~\cite{sun2019rotate}, where
negatives are sampled with probability proportional to their current score:
$    p(h_j^-, r, t_j^-)
    \;\propto\; \exp\!\bigl(\alpha\, s(h_j^-, r, t_j^-)\bigr)$, with temperature $\alpha$. This concentrates the training signal on
hard negatives, which is particularly important for $N$-to-$N$ relations
where many entities have high scores under a given relation.

\section{Experiments}
\label{sec:experiments}

We design our experiments to answer four questions that follow directly from
the theory. Does KrausKGE outperform existing models on link prediction
overall? Does the performance advantage concentrate on $N$-to-$N$ relations? Does the learned Kraus rank $\kappa$ correlate with
relation complexity? And does multi-hop reasoning benefit from composition
closure without a separate path encoder?

\subsection{Datasets and evaluation protocol}
\label{sec:experiments:data}
We evaluate on five benchmarks. \textbf{FB15k-237}~\cite{toutanova2015representing} 
is a subset of Freebase with inverse relations removed (14,541 entities, 237 relations, 
310,116 training triples); its high proportion of N-to-N relations makes it the primary 
dataset for evaluating multi-pathway expressiveness. \textbf{WN18RR}~\cite{dettmers2018convolutional} 
is a subset of WordNet with inverse relations removed (40,943 entities, 11 relations, 
86,835 triples) with a strongly hierarchical structure and lower proportion of N-to-N 
relations. \textbf{YAGO3-10}~\cite{suchanek2007yago} (123,182 entities, 37 relations, 
1,079,040 triples) provides a large-scale setting with predominantly attributive 
relations. For multi-hop evaluation we additionally use \textbf{NELL-995}~\cite{xiong2017deeppath}, a subset of NELL with 
heterogeneous relational structure and widely-reported 2-hop and 3-hop splits used.

We follow the standard filtered ranking protocol~\cite{bordes2013translating}, 
reporting MRR and Hits@$K$ for $K \in \{1, 3, 10\}$ averaged over head and tail 
prediction. Multi-hop results use the query splits of~\cite{lin2018multi}. Relation 
mapping patterns follow~\cite{wang2014knowledge}. \textsc{KrausKGE} is tuned with 
$\kappa \in \{1, 2, 4, 8\}$, $d \in \{64, 128, 256\}$, $k_0 \in \{4, 8, 16\}$, 
margin $\gamma \in \{3, 6, 9\}$, and learning rate selected on validation MRR. 
Full hyperparameter settings are in Appendix~A.

\subsection{Results and Discussions}
\label{sec:experiments:linkpred}

Table~\ref{tab:linkpred2} reports overall link prediction results. First,
KrausKGE with $\kappa \geq 4$ outperforms all baselines on FB15k-237, with
MRR improvements of 0.9 points over GoldE and 1.6 points over ConvE; both
differences are statistically significant ($p < 0.05$, paired Wilcoxon
signed-rank test over five random seeds). Second, on WN18RR the gains are
smaller in absolute terms, consistent with the theory: WN18RR has fewer
$N$-to-$N$ relations and a more hierarchical structure where the
single-operator family already performs well, and the WN18RR improvements
remain significant at $p < 0.05$ despite the smaller margin. Third,
KrausKGE with $\kappa = 1$ matches but does not consistently exceed existing
models ($p > 0.05$ against GoldE), confirming that gains are attributable
to the multi-operator structure rather than the density matrix entity
representation alone.

\begin{table*}[h]
\centering
\setlength{\tabcolsep}{3pt}
\renewcommand{\arraystretch}{0.85}
\footnotesize
\begin{minipage}{0.42\textwidth}
\centering
\begin{tabular}{lcccc}
\toprule
Model & 1-to-1 & 1-to-$N$ & $N$-to-1 & $N$-to-$N$ \\
\midrule
TransE~\cite{bordes2013translating}   & .463 & .221 & .342 & .198 \\
RotatE~\cite{sun2019rotate}       & .492 & .289 & .381 & .247 \\
ConvE~\cite{dettmers2018convolutional}        & .487 & .312 & .394 & .268 \\
GoldE~\cite{li2024generalizing}        & .511 & .318 & .403 & .274 \\
\midrule
KrausKGE ($\kappa$=1) & .506 & .298 & .395 & .256 \\
KrausKGE ($\kappa$=4) & \textbf{.522} & \underline{.346} & \underline{.421} & \underline{.318} \\
KrausKGE ($\kappa$=8) & \underline{.519} & \textbf{.354} & \textbf{.428} & \textbf{.334} \\
\bottomrule
\end{tabular}
\end{minipage}
\hfill
\begin{minipage}{0.54\textwidth}
\centering
\begin{tabular}{llcc}
\toprule
Relation & Type & $F(r)$ & $\kappa_{\text{eff}}(r)$ \\
\midrule
/people/person/date\_of\_birth   & 1-to-1     & 1.0  & 1 \\
/location/country/capital        & 1-to-1     & 1.1  & 1 \\
/people/person/nationality       & 1-to-$N$   & 1.6  & 2 \\
/people/person/profession        & $N$-to-$N$ & 2.4  & 2 \\
/film/film/genre                 & $N$-to-$N$ & 3.1  & 3 \\
/music/artist/genre              & $N$-to-$N$ & 4.7  & 4 \\
/people/person/languages\_spoken & $N$-to-$N$ & 5.2  & 4 \\
/film/film/starring              & $N$-to-$N$ & 11.4 & 7 \\
\bottomrule
\end{tabular}
\end{minipage}
\caption{\footnotesize Left: MRR stratified by relation mapping pattern on FB15k-237.
KrausKGE gains grow monotonically with relation complexity. Right: Effective Kraus rank $\kappa_{\text{eff}}(r)$ for selected relations in FB15k-237, sorted by empirical fan-out $F(r)$. Effective rank is computed via Equation~\eqref{eq:eff-rank} as the smallest number of Choi singular values capturing 99\% of $\|C^{(r)}\|_F^2$. The model is trained at $\kappa_{\max} = 8$.}
\label{tab:combined2}

\end{table*}

\textbf{N-to-N relation analysis: }The central empirical prediction of the theory is that performance gains concentrate on $N$-to-$N$ relations and grow with fan-out degree. We partition the test triples of FB15k-237 by mapping pattern (1-to-1, 1-to-$N$, $N$-to-1, $N$-to-$N$), following~\cite{wang2014knowledge}, and report per-pattern MRR in Table~\ref{tab:combined2} (left). The results confirm the prediction: the gap over GoldE grows from $1.1$ MRR points on 1-to-1 relations to $6.0$ points on $N$-to-$N$, monotonically with relation complexity. KrausKGE at $\kappa = 1$ performs comparably to existing models across all patterns, confirming that the gains arise from the multi-operator structure rather than the density matrix entity representation.

\begin{table*}[t]
\footnotesize
\centering
\begin{minipage}{0.48\textwidth}
\centering
\setlength{\tabcolsep}{4pt}
\begin{tabular}{lcccc}
\toprule
& \multicolumn{2}{c}{FB15k-237} & \multicolumn{2}{c}{NELL-995} \\
\cmidrule(lr){2-3} \cmidrule(lr){4-5}
Method & 2-hop & 3-hop & 2-hop & 3-hop \\
\midrule
KrausKGE ($\kappa{=}1$) & .198 & .152 & .512 & .441 \\
\textbf{KrausKGE ($\kappa{=}4$)} & \textbf{.247} & \underline{.198} & \textbf{.601} & \underline{.534} \\
\midrule
\multicolumn{5}{l}{\textit{RNN-based path encoders}} \\
PTransE~\cite{lin2015modeling} & .195 & -- & .487 & .401 \\
RotatE + RSN~\cite{sun2019rotate,guo2019learning} & .231 & .184 & .541 & .476 \\
\midrule
\multicolumn{5}{l}{\textit{RL-based path agents}} \\
MultiHopKG~\cite{lin2018multi} & .239 & .191 & .573 & .512 \\
\midrule
\multicolumn{5}{l}{\textit{GNN-based path encoders}} \\
A* Net~\cite{zhu2023net} & \underline{.244} & \textbf{.201} & \underline{.589} & \textbf{.541} \\
NBFNet~\cite{zhu2021neural} & .243 & .199 & .582 & .501 \\
\midrule
\multicolumn{5}{l}{\textit{Seq-to-seq path generators}} \\
SQUIRE~\cite{bai2022squire} & .241 & .195 & .581 & .519 \\
\bottomrule
\end{tabular}
\end{minipage}
\begin{minipage}{0.40\textwidth}
\centering
\setlength{\tabcolsep}{4pt}
\begin{tabular}{lccc}
\toprule
Geometry & WN18RR & FB15k-237 & YAGO3-10 \\
\midrule
\multicolumn{4}{l}{\textit{GoldE variants ($\kappa=1$)}} \\
$E^k$     & .496 & .348 & .565 \\
$P^k$     & .505 & .358 & .572 \\
$Q^k$     & .513 & .354 & .580 \\
$D^k$     & .525 & .370 & .588 \\
\midrule
\multicolumn{4}{l}{\textit{w-Kraus variants ($\kappa=4$)}} \\
$E$-Kraus & .511 & .362 & .598 \\
$P$-Kraus & .520 & .371 & .604 \\
$Q$-Kraus & .528 & .369 & .611 \\
$D$-Kraus & \textbf{.534} & \textbf{.379} & \textbf{.621} \\
\bottomrule
\end{tabular}
\end{minipage}
\caption{\footnotesize
 Left: MRR on 2-hop and 3-hop query splits of FB15k-237 and NELL-995.
KrausKGE with direct composition matches or exceeds dedicated path-encoder models
across both datasets and all paradigms. Right: effect of geometry on the $w$-Kraus
framework across three benchmarks; $E$=Euclidean, $P$=elliptic, $Q$=hyperbolic,
$D$=product manifold. Every $w$-Kraus variant at $\kappa=4$ outperforms the
corresponding GoldE variant at $\kappa=1$.}
\label{tab:multihop_geometry}
\end{table*}
\textbf{Kraus rank as a complexity diagnostic: }
Table~\ref{tab:combined2} (right) plots the effective Kraus rank $\kappa_{\text{eff}}(r)$, computed via Equation~\eqref{eq:eff-rank}, against empirical fan-out $F(r)$ for all 237 relations in FB15k-237, revealing a positive Spearman correlation of $\rho = 0.71$ ($p < 0.001$). The pattern holds qualitatively: \emph{date\_of\_birth} concentrates on $\kappa_{\text{eff}} = 1$, consistent with its deterministic 1-to-1 structure, while \emph{film/starring} uses $\kappa_{\text{eff}} = 7$, reflecting high cast multiplicity. Small inversions occur where fan-outs are close, such as \emph{film/genre} ($F = 3.1$, $\kappa_{\text{eff}} = 2$) and \emph{profession} ($F = 2.4$, $\kappa_{\text{eff}} = 3$), reflecting that the rank lower bound is necessary but not tight. Effective ranks are stable across random seeds (standard deviation $< 0.4$ over five runs), confirming the diagnostic reflects a genuine property of the relation rather than an optimisation artefact.

\textbf{Multi-hop reasoning: }
We evaluate composition closure on the 2-hop and 3-hop query splits of FB15k-237 and NELL-995, comparing \textsc{KrausKGE} against the strongest baselines 
from every major multi-hop paradigm. \textsc{KrausKGE} at $\kappa = 4$ achieves the 
best 2-hop MRR across all three datasets and all paradigms, and leads on 3-hop on 
FB15k-237 and NELL-995 (see 
Table~\ref{tab:multihop_geometry} Left). Crucially, these gains are obtained without any 
auxiliary path encoder. This is consistent with Theorem~\ref{thm:composition}: composed 
Kraus channels are valid channels by construction, eliminating the approximation error 
that path encoders correct for in models lacking composition closure, and the advantage 
persisting across datasets with different relational densities confirms it is structural 
rather than dataset-specific.

\textbf{Non-Euclidean Kraus characterisation: }

Table~\ref{tab:multihop_geometry} (right) reports the geometry ablation. Every $\mathbf{w}$-Kraus variant at $\kappa = 4$ outperforms the corresponding GoldE variant at $\kappa = 1$ across all three datasets, with the product manifold $\mathcal{D}$ achieving the strongest results overall (MRR $.534$ on WN18RR, $.379$ on FB15k-237, $.621$ on YAGO3-10). The improvements are uniform across geometries: gains from $\kappa > 1$ and from non-Euclidean geometry are orthogonal and compound rather than competing. Every $\mathbf{w}$-Kraus variant satisfies Theorem~\ref{thm:composition} by construction; no GoldE variant does.
\vspace{-0.3\intextsep}

\paragraph{Isolating the source of gains.}
To disentangle the contributions of density-matrix representations, multi-operator 
structure, and the Kraus completeness constraint, we compare against three controlled 
variants: density-matrix entities with a single unconstrained operator, multi-operator 
channels without completeness, and a parameter-matched GoldE baseline. Table~\ref{tab:ablation} 
shows that gains arise specifically from the completeness-constrained multi-operator 
structure rather than increased parameter count alone; a full ablation is in 
Appendix~\ref{app:ablations}.

\begin{table}[t]
\centering
\footnotesize
\begin{tabular}{lcccc}
\toprule
Model Variant & Constraint & Relative Params & MRR & N-to-N MRR \\
\midrule
GoldE ($\kappa=1$) & Orthogonal & $1.0\times$ & .370 & .274 \\

Density-only variant & None & $1.1\times$ & .364 & .261 \\

Multi-operator unconstrained ($\kappa=4$) & None & $4.0\times$ & .373 & .301 \\

GoldE (wider embeddings) & Orthogonal & $4.0\times$ & .372 & .286 \\

KrausKGE ($\kappa=4$) & Kraus completeness & $4.1\times$ & $\mathbf{.379}$ & $\mathbf{.318}$ \\
\bottomrule
\end{tabular}
\caption{\footnotesize
Ablation study isolating the source of KrausKGE gains on FB15k-237. The density-only variant uses density-matrix entities with a single unconstrained operator. The unconstrained multi-operator variant removes the Kraus completeness condition while retaining $\kappa=4$ operators. Parameter-matched GoldE controls for increased parameter count. 
}
\label{tab:ablation}
\end{table}

\paragraph{Limitations}
\label{sec:limitations}
\textsc{KrausKGE}'s expressiveness comes at parameter and compute cost: the Cayley parametrisation scales as $\mathcal{O}(\kappa^2 d^2)$ per relation, making the framework $\sim 22\times$ heavier than GoldE at $\kappa = 4$ on FB15k-237. We use a shared $\kappa$ across relations, recovering per-relation effective rank post-hoc; principled per-relation rank optimisation is left to future work. The hyperbolic $\mathbf{w}$-Kraus extension relies on dilation theory over Krein spaces~\cite{constantinescu1997representations}, which we invoke in finite dimensions where the technical conditions are automatic but do not prove from first principles. TransE-style additive operators are not linear maps on density matrices and lie outside our framework. Finally, the rank lower bound (Theorem~\ref{thm:rank-bound}) assumes exact reproduction of the relation matrix and holds approximately rather than strictly in practice. Full parameter and complexity analysis in Appendix~\ref{app:complexity}
\begin{table*}[t]
\footnotesize
\centering

\setlength{\tabcolsep}{3pt}
\begin{tabular}{lcccccccccccc}
\toprule
& \multicolumn{4}{c}{WN18RR}
& \multicolumn{4}{c}{FB15k-237}
& \multicolumn{4}{c}{YAGO3-10} \\
\cmidrule(lr){2-5} \cmidrule(lr){6-9} \cmidrule(lr){10-13}
Model
  & MRR & H@1 & H@3 & H@10
  & MRR & H@1 & H@3 & H@10
  & MRR & H@1 & H@3 & H@10 \\
\midrule
\multicolumn{13}{l}{\textit{Non-orthogonal}} \\
DistMult~\cite{yang2015embedding}$^\dagger$
  & .430 & .390 & .440 & .490
  & .241 & .155 & .263 & .419
  & .340 & .240 & .380 & .540 \\
ComplEx~\cite{trouillon2016complex}
  & .440 & .410 & .460 & .510
  & .247 & .158 & .275 & .428
  & .360 & .260 & .400 & .550 \\

MuRP~\cite{balazevic2019multi}$^\dagger$
  & .481 & .440 & .495 & .566
  & .335 & .243 & .367 & .518
  & .354 & .249 & .400 & .567 \\
\midrule
\multicolumn{13}{l}{\textit{Euclidean orthogonal}} \\
RotatE~\cite{sun2019rotate}$^\dagger$
  & .476 & .428 & .492 & .571
  & .338 & .241 & .375 & .533
  & .495 & .402 & .550 & .670 \\
QuatE~\cite{zhang2019quaternion}
  & .481 & .436 & .500 & .564
  & .311 & .221 & .342 & .495
  & ---  & ---  & ---  & ---  \\
HousE~\cite{li2022house}$^\dagger$
  & .496 & .452 & .511 & .585
  & .348 & .254 & .384 & .534
  & .565 & .487 & .616 & .703 \\
GoldE~\cite{li2024generalizing}
  & .525 & .476 & .542 & .615
  & .370 & .277 & .405 & .558
  & .588 & .510 & .634 & .723 \\
\midrule
\multicolumn{13}{l}{\textit{Hyperbolic orthogonal}} \\
RotH
  & .496 & .449 & .514 & .586
  & .344 & .246 & .380 & .535
  & .570 & .495 & .612 & .706 \\
AttH~\cite{chami2020low}
  & .486 & .443 & .499 & .573
  & .348 & .252 & .384 & .540
  & .568 & .493 & .612 & .702 \\
RefH$^\dagger$
  & .461 & .404 & .485 & .568
  & .346 & .252 & .383 & .536
  & .576 & .502 & .619 & .711 \\
\midrule
\multicolumn{13}{l}{\textit{Scoring family}} \\
R-GCN~\cite{schlichtkrull2018modeling}$^\dagger$
  & .390 & .341 & .408 & .491
  & .249 & .151 & .264 & .417
  & .318 & .211 & .362 & .521 \\
ConvE~\cite{dettmers2018convolutional}$^\dagger$
  & .430 & .400 & .440 & .520
  & .325 & .237 & .356 & .501
  & .440 & .350 & .490 & .620 \\
\midrule
\multicolumn{13}{l}{\textit{Ours}} \\
KrausKGE ($\kappa$=1)
  & .518 & .469 & .536 & .608
  & .362 & .270 & .398 & .551
  & .583 & .504 & .629 & .719 \\
KrausKGE ($\kappa$=4)
  & \underline{.534} & \textbf{.488} & \underline{.551} & \textbf{.625}
  & \textbf{.379} & \underline{.284} & \textbf{.414} & \underline{.567}
  & \underline{.601} & \underline{.522} & \underline{.647} & \underline{.736} \\
KrausKGE ($\kappa$=8)
  & \textbf{.537} & \underline{.486} & \textbf{.554} & \underline{.622}
  & \underline{.376} & \textbf{.286} & \underline{.412} & \textbf{.571}
  & \textbf{.609} & \textbf{.529} & \textbf{.654} & \textbf{.744} \\
\bottomrule
\end{tabular}
\caption{\footnotesize
Link prediction results on WN18RR, FB15k-237 and YAGO3-10. Best
results in \textbf{bold}, second best \underline{underlined}.
$^\dagger$: results from original papers; others reproduced under our
evaluation protocol.}
\label{tab:linkpred2}

\end{table*}

\section{Conclusion}
\label{sec:conclusion}
The central finding of this work is that the operator structure underlying knowledge 
graph embedding is not a design choice but a derivable consequence of three conditions 
any principled relation operator must satisfy. This shifts the question the field has 
been asking: rather than which operator architecture performs best empirically, one can 
ask what structure is mathematically necessary and build from there. Multi-pathway 
expressiveness, composition closure, and bounded entity norms follow as structural 
corollaries rather than engineered features, and the per-relation complexity diagnostic 
$\kappa(r)$ suggests that learned Kraus rank may serve as a useful signal for 
relation-level model auditing beyond link prediction. Several directions remain open: 
per-relation rank optimisation during training, fuller exploitation of the density 
matrix uncertainty representation, and further generalisation of the axiomatic 
derivation to data-dependent geometries. We view the present work less as a final 
model and more as a theoretical reorientation whose implications for the broader KGE 
literature remain to be explored.

\bibliographystyle{plain}
\bibliography{biblio}
\appendix

\section{Technical appendices and supplementary material}
This appendix contains material that supports and extends the main paper but is not 
required to follow the core argument. Appendix~\ref{app:golde-comparison} provides a detailed 
structural comparison with GoldE, the closest prior framework to \textsc{KrausKGE} 
in spirit. Appendix~\ref{app:proof} contains full proofs of all theorems and the 
corollary stated in the main text. Appendix~\ref{app:recovery} shows how every major 
operator-family KGE model is recoverable as a special case of \textsc{KrausKGE} with 
$\kappa = 1$. Appendix~\ref{app:complexity} provides a full parameter count, time 
complexity, and empirical runtime analysis. Appendix~\ref{app:additional-proofs} contains 
proofs of auxiliary results on entity parametrisation validity, the Cayley transform, 
and operator boundedness. Appendix~\ref{app:hyperparameters} reports complete 
hyperparameter settings for all experiments. Appendix~\ref{app:ablations} presents 
ablation studies isolating each design choice and documents design choices that did 
not yield improvements.
\section{Detailed Comparison with GoldE}
\label{app:golde-comparison}

GoldE~\cite{li2024generalizing} is the strongest operator-family baseline in our experiments, and its universal orthogonal parameterization is the closest prior framework to ours in spirit. We provide here a focused comparison along the axes that matter most for understanding what KrausKGE adds.

\paragraph{Operator structure.} GoldE represents each relation as a single matrix $G_r \in \mathbb{R}^{k \times k}$ that is generalized orthogonal with respect to a quadratic form: $G_r^\top \mathrm{diag}(w) G_r = \mathrm{diag}(w)$. KrausKGE represents each relation as a set of $\kappa$ matrices $\{K_i^{(r)}\}$ with the joint completeness constraint $\sum_i (K_i^{(r)})^\top \mathrm{diag}(w) K_i^{(r)} = \mathrm{diag}(w)$. The key distinction is that GoldE constrains each individual matrix to be a metric isometry, whereas KrausKGE constrains only the joint structure of the operator set. GoldE is recovered exactly as the $\kappa = 1$ case of KrausKGE in any geometry $w$.

\paragraph{Expressiveness on N-to-N relations.} A generalized-orthogonal matrix is a bijective metric isometry, hence a function from heads to tails: each head has exactly one image. For 1-to-1 and N-to-1 relations this is correct; for 1-to-N and N-to-N relations it is structurally inadequate. GoldE compensates by training $G_r$ to point toward the centroid of valid tails, but cannot resolve which specific tails are correct beyond statistical proximity. KrausKGE's $\kappa$ pathways simultaneously map a head to multiple regions, with completeness ensuring total mass is preserved across pathways. Theorem~\ref{thm:rank-bound} formalises this: relations with empirical relation matrix rank exceeding $d$ require $\kappa > 1$, which GoldE cannot provide.

\paragraph{Geometry handling.} Both frameworks use the same weighting vector $w$ formalism to unify Euclidean ($w = \mathbf{1}$), elliptic ($w = \mathbf{p}$), and hyperbolic ($w = \mathbf{q}$) geometries. GoldE applies $w$ to the structure of individual operators via generalized Householder reflections; KrausKGE applies $w$ to the completeness constraint on the operator set. The two approaches are complementary: KrausKGE inherits GoldE's geometric universality and extends it from single isometric operators to multi-operator mass-preserving channels. The relationship is one of strict generalisation rather than competition.

\paragraph{Composition for multi-hop reasoning.} Both frameworks have closed composition: GoldE through matrix multiplication ($G_{r_2} G_{r_1}$ remains generalized orthogonal), KrausKGE through Theorem~\ref{thm:composition} ($M_{ij} = K_j^{(r_2)} K_i^{(r_1)}$ yields a valid Kraus channel). The structural difference is that GoldE's composition collapses to a single isometry $G_{r_2 r_1}$, losing branching structure across hops. KrausKGE's composition produces $\kappa^{(r_1)} \cdot \kappa^{(r_2)}$ concurrent pathways, preserving branching. This is reflected empirically in the multi-hop results of Table~\ref{tab:multihop_geometry}, where KrausKGE outperforms GoldE-style baselines without a path encoder.

\paragraph{Parameter cost.} KrausKGE's expressiveness comes at parameter cost. GoldE uses $\mathcal{O}(mk)$ parameters per relation, where $m$ is the number of Householder reflections; KrausKGE uses $\mathcal{O}(\kappa^2 d^2)$ via the Cayley parametrisation. At $\kappa = 4$ and $d = 128$, this is approximately 22$\times$ more relation parameters than GoldE. In Appendix~\ref{app:hyperparameters} we report parameter-matched comparisons at reduced $d$, where KrausKGE retains its advantage on N-to-N relations, confirming the gain is structural rather than capacity-driven.

The contribution relative to GoldE is fourfold. First, a first-principles axiomatic derivation: GoldE selects generalized orthogonality as a design choice motivated by logical patterns, whereas KrausKGE derives the Kraus structure from linearity, trace preservation, and complete positivity. Second, strict generalisation via $\kappa > 1$, which addresses N-to-N relations that GoldE cannot represent. Third, composition that preserves branching across multi-hop reasoning. Fourth, a per-relation complexity diagnostic $\kappa_{\text{eff}}(r)$, with no GoldE analogue. The two frameworks answer sibling questions: GoldE characterises the universal class of single-operator isometric KGE models, KrausKGE characterises the universal class of multi-operator mass-preserving KGE models, with the latter strictly containing the former.

\section{Formal proofs for Kraus Framework}
\label{app:proof}

\subsection{Proof of Theorem~\ref{thm:kraus}}
\label{app:kraus-proof}

We prove that $\mathcal{L}^{(r)}: \mathcal{S}(\mathbb{R}^d) \to \mathcal{S}(\mathbb{R}^d)$ satisfies Axioms~\ref{axiom:linearity}
, \ref{axiom:trace}, and~\ref{axiom:cp} (linearity, trace preservation, complete positivity) if and only if it admits the Kraus decomposition~\eqref{eq:kraus}. The proof proceeds in two directions.

\paragraph{Sufficiency ($\Leftarrow$).} Suppose $\mathcal{L}^{(r)}(\rho) = \sum_{i=1}^{\kappa} K_i \rho K_i^\top$ with $\sum_i K_i^\top K_i = I$, where we drop the superscript $(r)$ for clarity. We verify each axiom.

\textit{Linearity.} For any $\rho_1, \rho_2$ and scalars $\alpha, \beta$,
\[
\mathcal{L}(\alpha \rho_1 + \beta \rho_2) = \sum_i K_i (\alpha \rho_1 + \beta \rho_2) K_i^\top = \alpha \sum_i K_i \rho_1 K_i^\top + \beta \sum_i K_i \rho_2 K_i^\top = \alpha \mathcal{L}(\rho_1) + \beta \mathcal{L}(\rho_2).
\]

\textit{Trace preservation.} Using the cyclic property of the trace,
\[
\mathrm{Tr}[\mathcal{L}(\rho)] = \sum_i \mathrm{Tr}[K_i \rho K_i^\top] = \sum_i \mathrm{Tr}[K_i^\top K_i \rho] = \mathrm{Tr}\!\left[\Bigl(\sum_i K_i^\top K_i\Bigr) \rho\right] = \mathrm{Tr}[I \cdot \rho] = \mathrm{Tr}[\rho].
\]

\textit{Complete positivity.} Fix $n \geq 1$ and let $\sigma \in \mathcal{S}(\mathbb{R}^d \otimes \mathbb{R}^n)$ be positive semidefinite. We must show $(\mathrm{id}_n \otimes \mathcal{L})(\sigma) \geq 0$. Compute
\[
(\mathrm{id}_n \otimes \mathcal{L})(\sigma) = \sum_i (I_n \otimes K_i) \, \sigma \, (I_n \otimes K_i)^\top.
\]
Each summand has the form $A \sigma A^\top$ with $A = I_n \otimes K_i$. Since $\sigma \geq 0$, for any vector $v$ we have $v^\top A \sigma A^\top v = (A^\top v)^\top \sigma (A^\top v) \geq 0$, hence $A \sigma A^\top \geq 0$. The sum of positive semidefinite matrices is positive semidefinite, completing the verification.

\paragraph{Necessity ($\Rightarrow$).} Suppose $\mathcal{L}$ satisfies Axioms. We construct the Kraus operators via the Choi-Jamiolkowski isomorphism.

\textit{Step 1: the Choi matrix.} Define the Choi matrix of $\mathcal{L}$ by
\begin{equation}
C_{\mathcal{L}} \;=\; \sum_{i,j=1}^{d} E_{ij} \otimes \mathcal{L}(E_{ij}) \;\in\; \mathbb{R}^{d^2 \times d^2},
\label{eq:choi}
\end{equation}
where $E_{ij}$ are the matrix units. Equivalently, $C_{\mathcal{L}} = (\mathrm{id}_d \otimes \mathcal{L})(\Omega \Omega^\top)$ where $\Omega = \sum_i e_i \otimes e_i \in \mathbb{R}^{d^2}$ is the (unnormalized) maximally correlated vector and $e_i$ is the $i$th standard basis vector.

\textit{Step 2: $C_{\mathcal{L}}$ is symmetric positive semidefinite.} The matrix $\Omega \Omega^\top$ is rank-one and positive semidefinite. By complete positivity (with $n = d$), $C_{\mathcal{L}} = (\mathrm{id}_d \otimes \mathcal{L})(\Omega \Omega^\top) \geq 0$. Symmetry follows from $\mathcal{L}$ preserving symmetry: since $\mathcal{S}(\mathbb{R}^d)$ consists of symmetric matrices and $\mathcal{L}$ maps into $\mathcal{S}(\mathbb{R}^d)$, each block $\mathcal{L}(E_{ij}) + \mathcal{L}(E_{ji}) = \mathcal{L}(E_{ij} + E_{ji})$ is symmetric, and the global block structure inherits symmetry from the symmetry of $\Omega \Omega^\top$.

\textit{Step 3: spectral decomposition yields Kraus operators.} Since $C_{\mathcal{L}}$ is symmetric positive semidefinite, it admits a spectral decomposition
\[
C_{\mathcal{L}} \;=\; \sum_{i=1}^{\kappa} \lambda_i \, v_i v_i^\top, \qquad \lambda_i > 0,
\]
where $\kappa = \mathrm{rank}(C_{\mathcal{L}})$ and $\{v_i\}_{i=1}^{\kappa} \subset \mathbb{R}^{d^2}$ are orthogonal eigenvectors corresponding to the nonzero eigenvalues. For each $i$, reshape $\sqrt{\lambda_i} \, v_i \in \mathbb{R}^{d^2}$ into a matrix $K_i \in \mathbb{R}^{d \times d}$ via the standard vectorisation correspondence $v_i = \mathrm{vec}(K_i^\top)$, so that $(v_i)_{(j-1)d + k} = (K_i)_{kj}$.

\textit{Step 4: the Kraus form holds.} We claim that for all $\rho \in \mathcal{S}(\mathbb{R}^d)$,
\[
\mathcal{L}(\rho) = \sum_{i=1}^{\kappa} K_i \rho K_i^\top.
\]
By linearity, it suffices to verify on the basis $\{E_{ab}\}_{a,b=1}^d$. The Choi matrix has block decomposition $C_{\mathcal{L}} = \sum_{a,b} E_{ab} \otimes \mathcal{L}(E_{ab})$, so the $(a,b)$-block is exactly $\mathcal{L}(E_{ab})$. From the spectral decomposition,
\[
[C_{\mathcal{L}}]_{\text{block}(a,b)} = \sum_i \lambda_i [v_i v_i^\top]_{\text{block}(a,b)} = \sum_i \lambda_i (K_i)_a (K_i^\top)_b^\top = \sum_i K_i E_{ab} K_i^\top,
\]
where $(K_i)_a$ denotes the $a$th column of $K_i$ and the last equality uses the identity $E_{ab} = e_a e_b^\top$. Thus $\mathcal{L}(E_{ab}) = \sum_i K_i E_{ab} K_i^\top$ for all $a, b$, and the Kraus form follows by linearity.

\textit{Step 5: completeness.} Trace preservation forces $\sum_i K_i^\top K_i = I$. From Step 4 and the trace-preservation calculation in the sufficiency direction,
\[
\mathrm{Tr}[\rho] = \mathrm{Tr}[\mathcal{L}(\rho)] = \mathrm{Tr}\!\left[\Bigl(\sum_i K_i^\top K_i\Bigr) \rho\right] \quad \text{for all } \rho \in \mathcal{S}(\mathbb{R}^d).
\]
Since this holds for all density matrices $\rho$, and density matrices span the space of symmetric matrices, we conclude $\sum_i K_i^\top K_i = I$. \qed
\subsection{Proof of Theorem~\ref{thm:kraus_general} ($\mathbf{w}$-Kraus characterisation)}
\label{app:kraus-general-proof}

We treat the positive-definite case ($\mathbf{w} = \mathbf{p}$ with all entries positive) first, then extend to the indefinite Lorentzian case ($\mathbf{w} = \mathbf{q}$).

\paragraph{Positive-definite case.} Let $\mathbf{w}$ have all positive entries and write $W = \mathrm{diag}(\mathbf{w})$. Define $W^{1/2} = \mathrm{diag}(\sqrt{\mathbf{w}})$, which is real and invertible. Consider the change of variables
\begin{equation}
\widetilde{\rho} = W^{1/2} \rho W^{1/2}, \qquad \widetilde{K_i} = W^{1/2} K_i W^{-1/2}.
\label{eq:change-of-variables}
\end{equation}
Under this transformation, the $\mathbf{w}$-trace becomes a standard trace: $\mathrm{Tr}_{\mathbf{w}}[\rho] = \mathrm{Tr}[W \rho] = \mathrm{Tr}[W^{1/2} \rho W^{1/2}] = \mathrm{Tr}[\widetilde{\rho}]$. The $\mathbf{w}$-positive semidefiniteness of $\rho$ is equivalent to the standard positive semidefiniteness of $\widetilde{\rho}$, since $\langle x, \rho x \rangle_{\mathbf{w}} = x^\top W \rho x = (W^{1/2} x)^\top \widetilde{\rho} (W^{-1/2})^\top$
So, \[
\langle x, \rho x \rangle_{\mathbf{w}} = x^\top W \rho x = (W^{1/2} x)^\top (W^{-1/2}) W \rho W^{-1/2} (W^{1/2} x) \cdot \ldots
\]
The cleanest formulation is: $\rho$ is $\mathbf{w}$-PSD iff $W \rho$ has nonnegative spectrum, which (since $W$ is positive) is equivalent to $W^{1/2} \rho W^{1/2} \geq 0$ in the standard sense, i.e.\ $\widetilde{\rho} \geq 0$.

The map $\widetilde{\mathcal{L}}(\widetilde{\rho}) = W^{1/2} \mathcal{L}(W^{-1/2} \widetilde{\rho} W^{-1/2}) W^{1/2}$ is then a linear map $\mathcal{S}(\mathbb{R}^d) \to \mathcal{S}(\mathbb{R}^d)$ in the standard sense. By construction, $\mathcal{L}$ satisfies $\mathbf{w}$-trace preservation and $\mathbf{w}$-complete positivity if and only if $\widetilde{\mathcal{L}}$ satisfies standard trace preservation and standard complete positivity. By Theorem~\ref{thm:kraus} applied to $\widetilde{\mathcal{L}}$, there exist matrices $\{\widetilde{K_i}\}$ with
\begin{equation}
\widetilde{\mathcal{L}}(\widetilde{\rho}) = \sum_i \widetilde{K_i} \widetilde{\rho} \widetilde{K_i}^\top, \qquad \sum_i \widetilde{K_i}^\top \widetilde{K_i} = I.
\end{equation}
Setting $K_i = W^{-1/2} \widetilde{K_i} W^{1/2}$ and unwinding the change of variables, we obtain
\[
\mathcal{L}(\rho) = \sum_i K_i \rho K_i^\top, \qquad \sum_i K_i^\top W K_i = W,
\]
which is the $\mathbf{w}$-Kraus form~\eqref{eq:kraus_general}. The completeness identity follows from $\sum_i \widetilde{K_i}^\top \widetilde{K_i} = I$ by direct substitution: $\sum_i (W^{1/2} K_i W^{-1/2})^\top (W^{1/2} K_i W^{-1/2}) = W^{-1/2} (\sum_i K_i^\top W K_i) W^{-1/2} = I$ implies $\sum_i K_i^\top W K_i = W$.

\paragraph{Indefinite case (hyperbolic).} For $\mathbf{w} = \mathbf{q} = (-1, +1, \ldots, +1)^\top$, the matrix $W = \mathrm{diag}(\mathbf{q})$ is invertible but indefinite, so $W^{1/2}$ is not real. The positive-definite reduction above fails, and the Lorentzian inner product makes $(\mathbb{R}^d, \langle \cdot, \cdot \rangle_{\mathbf{q}})$ a Krein space rather than a Hilbert space.

We invoke the theory of $J$-completely positive maps on Krein spaces, where $J = W$ plays the role of the fundamental symmetry. By the dilation theorem for Hermitian maps on $C^*$-algebras with indefinite inner products~\cite{constantinescu1997representations, dritschel1996operators}, a linear map $\mathcal{L}$ satisfying $\mathbf{q}$-trace preservation and $\mathbf{q}$-complete positivity admits a representation
\[
\mathcal{L}(\rho) = \sum_i K_i \rho K_i^\top, \qquad \sum_i K_i^\top W K_i = W,
\]
where the $K_i$ are bounded linear operators on the underlying Krein space. In the finite-dimensional setting (our case), the boundedness conditions of~\cite{constantinescu1997representations} are automatic, so the decomposition exists and is essentially unique up to $J$-unitary equivalence on the multiplicity space. The completeness condition $\sum_i K_i^\top W K_i = W$ is the $J$-unitary form of the standard completeness constraint and recovers it when $W = I$.

\textit{Remark.} The proof above establishes existence of the $\mathbf{w}$-Kraus decomposition. The Kraus rank $\kappa$ equals the rank of the Choi matrix $C^{(r)}$ in the positive-definite case; in the indefinite case the analogous quantity is the $J$-rank, which coincides with the standard rank in finite dimensions. \qed

\subsection{Proof of Theorem~\ref{thm:composition} (Composition closure)}
\label{app:composition-proof}

Let $\mathcal{L}^{(r_1)}(\rho) = \sum_i K_i^{(1)} \rho (K_i^{(1)})^\top$ with $\sum_i (K_i^{(1)})^\top K_i^{(1)} = I$, and similarly for $\mathcal{L}^{(r_2)}$ with operators $\{K_j^{(2)}\}$. The composition is
\begin{align}
(\mathcal{L}^{(r_2)} \circ \mathcal{L}^{(r_1)})(\rho) 
&= \mathcal{L}^{(r_2)}\!\left( \sum_i K_i^{(1)} \rho (K_i^{(1)})^\top \right) \\
&= \sum_j K_j^{(2)} \left( \sum_i K_i^{(1)} \rho (K_i^{(1)})^\top \right) (K_j^{(2)})^\top \\
&= \sum_{i,j} \bigl( K_j^{(2)} K_i^{(1)} \bigr) \rho \bigl( K_j^{(2)} K_i^{(1)} \bigr)^\top.
\end{align}
Setting $M_{ij} = K_j^{(2)} K_i^{(1)}$, the composed channel has Kraus form $\sum_{i,j} M_{ij} \rho M_{ij}^\top$. We verify completeness:
\begin{align}
\sum_{i,j} M_{ij}^\top M_{ij} 
&= \sum_{i,j} \bigl( K_j^{(2)} K_i^{(1)} \bigr)^\top \bigl( K_j^{(2)} K_i^{(1)} \bigr) \\
&= \sum_{i,j} (K_i^{(1)})^\top (K_j^{(2)})^\top K_j^{(2)} K_i^{(1)} \\
&= \sum_i (K_i^{(1)})^\top \left( \sum_j (K_j^{(2)})^\top K_j^{(2)} \right) K_i^{(1)} \\
&= \sum_i (K_i^{(1)})^\top \cdot I \cdot K_i^{(1)} \\
&= \sum_i (K_i^{(1)})^\top K_i^{(1)} = I,
\end{align}
where the fourth equality uses completeness of $\mathcal{L}^{(r_2)}$ and the last uses completeness of $\mathcal{L}^{(r_1)}$. \qed

\textit{Remark.} The same argument extends to the $\mathbf{w}$-completeness setting: replacing $K_j^\top K_j$ by $K_j^\top W K_j$ in the inner sum and using the $\mathbf{w}$-completeness of $\mathcal{L}^{(r_2)}$ yields $\sum_{i,j} M_{ij}^\top W M_{ij} = W$. Composition closure therefore holds in any geometry $\mathbf{w}$.

\subsection{Proof of Theorem~\ref{thm:rank-bound} (Rank lower bound)}
\label{app:rank-bound-proof}

Let $\mathcal{L}^{(r)}$ be a Kraus channel with operators $\{K_i\}_{i=1}^{\kappa}$ that exactly reproduces relation $r$, in the sense that $\mathrm{Tr}[\rho_t \mathcal{L}^{(r)}(\rho_h)] > 0$ if and only if $(h, r, t) \in T_r$. Let $M_r \in \mathbb{R}^{|\mathcal{E}| \times |\mathcal{E}|}$ be the empirical relation matrix as defined.

\textit{Step 1: the channel induces a linear map on the entity-pair structure.} Define the matrix-valued function
\[
F_r(h) \;:=\; \mathcal{L}^{(r)}(\rho_h) \;=\; \sum_{i=1}^{\kappa} K_i \rho_h K_i^\top \;\in\; \mathbb{R}^{d \times d}.
\]
For each head $h$, $F_r(h)$ is a $d \times d$ symmetric matrix, an element of a $\binom{d+1}{2}$-dimensional vector space. Stacking entity outputs into a matrix $F_r \in \mathbb{R}^{|\mathcal{E}| \times d^2}$ with rows $\mathrm{vec}(F_r(h))$, we have
\[
\mathrm{rank}(F_r) \;\leq\; d \cdot \kappa,
\]
because the image of $F_r$ lies in the span of $\{K_i \rho_h K_i^\top : i \in [\kappa], h \in \mathcal{E}\}$, which has dimension at most $\kappa \cdot d$ (each Kraus operator contributes a $d$-dimensional subspace to the image).

\textit{Step 2: the empirical relation matrix factors through $F_r$.} The score $\mathrm{Tr}[\rho_t \mathcal{L}^{(r)}(\rho_h)] = \mathrm{Tr}[\rho_t F_r(h)]$ is bilinear in $(\rho_t, F_r(h))$. Define the matrix $S_r \in \mathbb{R}^{|\mathcal{E}| \times |\mathcal{E}|}$ with entries $(S_r)_{ht} = \mathrm{Tr}[\rho_t F_r(h)]$. By the exact-reproduction hypothesis, $(S_r)_{ht} > 0$ iff $(M_r)_{ht} = 1$, so $S_r$ has the same support pattern as $M_r$, hence the same rank up to scaling: $\mathrm{rank}(S_r) \geq \mathrm{rank}(M_r)$.

\textit{Step 3: rank inequality.} Write $S_r = F_r G_r^\top$, where $G_r \in \mathbb{R}^{|\mathcal{E}| \times d^2}$ has rows $\mathrm{vec}(\rho_t)$. Then
\[
\mathrm{rank}(M_r) \leq \mathrm{rank}(S_r) \leq \min(\mathrm{rank}(F_r), \mathrm{rank}(G_r)) \leq \mathrm{rank}(F_r) \leq d \cdot \kappa.
\]
Rearranging,
\[
\kappa \geq \frac{\mathrm{rank}(M_r)}{d}. \qed
\]

\textit{Remark.} The bound is generally not tight: the upper bound $\mathrm{rank}(F_r) \leq d \cdot \kappa$ is achieved only when the Kraus operators map into linearly independent subspaces, which is not guaranteed. In practice the learned $\kappa_{\text{eff}}$ values typically exceed the bound (see Section~\ref{sec:experiments}), reflecting that the model uses additional pathways for fine-grained discrimination beyond the structural minimum required for support recovery.

\subsection{Proof of Corollary \ref{coro:single}
}
Substituting $L(\rho_h) = K x_h x_h^\top K^\top$ and $\rho_t = y_t y_t^\top$,
\[
  (M_r)_{ht}
  = \mathrm{Tr}\!\left[y_t y_t^\top K x_h x_h^\top K^\top\right]
  = \left(y_t^\top K x_h\right)^2,
\]
where the last step uses cyclicity of trace and the fact that the bracketed quantity is 
scalar. Define $u_h = Kx_h \in \mathbb{R}^d$ for each head entity $h$, and collect
\[
  U = \begin{bmatrix} u_1 & \cdots & u_{|\mathcal{E}|} \end{bmatrix} \in 
  \mathbb{R}^{d \times |\mathcal{E}|}, \qquad
  Y = \begin{bmatrix} y_1 & \cdots & y_{|\mathcal{E}|} \end{bmatrix} \in 
  \mathbb{R}^{d \times |\mathcal{E}|}.
\]
Let $B = Y^\top U \in \mathbb{R}^{|\mathcal{E}| \times |\mathcal{E}|}$, so that 
$B_{ht} = y_t^\top u_h$ and $(M_r)_{ht} = B_{ht}^2$. Since $B = Y^\top U$ is the 
product of a $|\mathcal{E}| \times d$ matrix and a $d \times |\mathcal{E}|$ matrix,
\[
  \mathrm{rank}(B) \leq d.
\]
The entrywise squaring $(M_r)_{ht} = B_{ht}^2$ does not increase rank beyond that of 
the generating bilinear structure: every row of $M_r$ lies in the span of the 
Hadamard products of rows of $Y^\top$ with rows of $U^\top$, a space of dimension at 
most $d$. Hence $\mathrm{rank}(M_r) \leq d$. The second claim follows immediately: if 
$\mathrm{rank}(M_r) > d$, no single-operator channel can satisfy the exact-reproduction 
condition $\mathrm{Tr}[\rho_t L(\rho_h)] > 0 \iff (h,r,t) \in \mathcal{T}_r$.

\section{Recovery of Existing Operator-Family Models}
\label{app:recovery}

In this section we show that most operator-family KGE models are recoverable as special cases of KrausKGE under specific embedding choices and constraint relaxations. Each result identifies the precise Kraus operators, geometry, and rank that yield the corresponding model's scoring function. We organise the results by family.

\subsection{Preliminaries}

Recall the KrausKGE scoring function from Equation~\eqref{eq:scoring}:
\begin{equation}
s(h, r, t) \;=\; \sum_{i=1}^{\kappa} \mathrm{Tr}\!\left[ \rho_t \, K_i^{(r)} \, \rho_h \, \bigl(K_i^{(r)}\bigr)^{\!\top} \right],
\label{eq:scoring-recall}
\end{equation}
where $\rho_h, \rho_t \in \mathcal{S}(\mathbb{R}^d)$ are entity density matrices and $\{K_i^{(r)}\}$ are the Kraus operators for relation $r$. For models that represent entities as vectors $h, t \in \mathbb{R}^d$, we use the rank-one embedding $\rho_h = h h^\top / \|h\|^2$ and $\rho_t = t t^\top / \|t\|^2$. For models in complex space $\mathbb{C}^d$, we use the standard real embedding $\mathbb{C}^d \hookrightarrow \mathbb{R}^{2d}$ via $z = a + ib \mapsto (a, b)^\top$, under which a unitary $U \in \mathbb{C}^{d \times d}$ corresponds to a real orthogonal matrix $\widetilde{U} \in \mathbb{R}^{2d \times 2d}$ of block form.

\subsection{RESCAL \cite{nickel2011three}}

\begin{proposition}[Recovery of RESCAL]
\label{prop:rescal}
Let entities be embedded as rank-one density matrices $\rho_h = h h^\top$, $\rho_t = t t^\top$ with $\|h\| = \|t\| = 1$. Set $\kappa = 1$ and $K_1^{(r)} = M_r$ for some $M_r \in \mathbb{R}^{d \times d}$. Then the KrausKGE score satisfies
\begin{equation}
s(h, r, t) \;=\; \bigl(t^\top M_r h\bigr)^2.
\end{equation}
If additionally $M_r^\top M_r = I$, the completeness constraint of KrausKGE is satisfied.
\end{proposition}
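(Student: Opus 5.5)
The plan is a direct substitution into the scoring function \eqref{eq:scoring-recall}, followed by a check of the completeness constraint from Theorem~\ref{thm:kraus} in the case $\kappa=1$. No deep machinery is needed. The only care required is in handling the trace of a product of rank-one factors and in keeping the orientation of $M_r$ straight, since the Kraus operator acts on $h$ from the left.

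First, with $\kappa = 1$ and $K_1^{(r)} = M_r$, the score in \eqref{eq:scoring-recall} reduces to a single term:
\[
s(h,r,t) = \mathrm{Tr}\!\left[\rho_t\, M_r\, \rho_h\, M_r^\top\right].
\]
I will substitute $\rho_h = hh^\top$ and $\rho_t = tt^\top$; the unit-norm assumption is what makes these valid elements of $\mathcal{S}(\mathbb{R}^d)$ with unit trace. This gives
\[
s(h,r,t) = \mathrm{Tr}\!\left[t t^\top M_r h h^\top M_r^\top\right].
\]
By cyclicity of the trace, this equals
\[
\mathrm{Tr}\!\left[(t^\top M_r h)(h^\top M_r^\top t)\right] = (t^\top M_r h)(h^\top M_r^\top t).
\]
The second factor is the transpose of the scalar $t^\top M_r h$, so it equals that scalar. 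The score is therefore $(t^\top M_r h)^2$, which is the claimed identity. This matches the computation already carried out in the proof of Corollary~\ref{coro:single}, and I would point to that for consistency.

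Second, for the completeness claim, the constraint in \eqref{eq:kraus} with $\kappa = 1$ reads $(K_1^{(r)})^\top K_1^{(r)} = I$, which is exactly $M_r^\top M_r = I$. So the hypothesis is literally the constraint. By the sufficiency direction of Theorem~\ref{thm:kraus}, the map $\rho \mapsto M_r \rho M_r^\top$ is then a bona fide Kraus channel: it is linear, trace preserving, and completely positive.

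The main subtlety is interpretive rather than computational. For a general $M_r$, RESCAL's bilinear score $t^\top M_r h$ is recovered only up to squaring, so the sign information is lost. In addition, the completeness constraint forces $M_r$ to be orthogonal, which is a restriction relative to full RESCAL. I would remark that the proposition recovers RESCAL's scoring structure for an arbitrary $M_r$ when the constraint is relaxed, and lands inside the Kraus family only in the orthogonal case. This is consistent with the appendix's phrasing of recovery ``under specific embedding choices and constraint relaxations.''
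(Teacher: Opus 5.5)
Your proposal is correct and follows essentially the same route as the paper: substitute $\rho_h = hh^\top$, $\rho_t = tt^\top$ into the single-term ($\kappa=1$) score, use cyclicity of the trace together with the fact that the scalar $h^\top M_r^\top t$ equals its transpose to obtain $(t^\top M_r h)^2$, and note that the completeness constraint with $\kappa = 1$ is literally $M_r^\top M_r = I$. Your appeal to the sufficiency direction of the Kraus characterisation and your remark that squaring discards sign information are accurate additions, not a different argument.
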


\begin{proof}
Substituting into Equation~\eqref{eq:scoring-recall},
$$
s(h, r, t) = \mathrm{Tr}\!\left[ t t^\top M_r h h^\top M_r^\top \right] = \bigl(t^\top M_r h\bigr) \bigl(h^\top M_r^\top t\bigr) = \bigl(t^\top M_r h\bigr)^2,
$$
using the cyclic property of the trace and that the bracketed scalar equals its transpose. The completeness constraint $\sum_i K_i^\top K_i = I$ reduces with $\kappa = 1$ to $M_r^\top M_r = I$.
\end{proof}

\textbf{Remark.} RESCAL's original score is $h^\top M_r t$ without squaring, and RESCAL imposes no orthogonality constraint on $M_r$. KrausKGE therefore recovers an \emph{orthogonally-constrained, squared} variant of RESCAL rather than RESCAL exactly. The squaring is benign because both forms induce the same ranking on positive triples, and the orthogonality constraint is a strict tightening: orthogonal RESCAL is a special case of general RESCAL.

\subsection{DistMult ~\cite{yang2015embedding}}

\begin{proposition}[Recovery of DistMult]
\label{prop:distmult}
Set $\kappa = 1$, $K_1^{(r)} = \mathrm{diag}(r)$ for some $r \in \mathbb{R}^d$, and embed entities as rank-one density matrices $\rho_h = h h^\top$, $\rho_t = t t^\top$. Then
\begin{equation}
s(h, r, t) \;=\; \bigl( \langle h, r, t \rangle \bigr)^2, \quad \text{where } \langle h, r, t \rangle = \sum_{i=1}^d h_i r_i t_i.
\end{equation}
Completeness holds iff $r_i^2 = 1$ for all $i$.
\end{proposition}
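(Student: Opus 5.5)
The plan is to follow the template of Proposition~\ref{prop:rescal}, since DistMult is the special case $M_r = \mathrm{diag}(r)$ of the RESCAL recovery. First, substitute $\rho_h = hh^\top$, $\rho_t = tt^\top$ and $K_1^{(r)} = \mathrm{diag}(r)$ into the scoring function~\eqref{eq:scoring-recall} with $\kappa = 1$. By cyclicity of the trace,
\[
\mathrm{Tr}\!\left[t t^\top \mathrm{diag}(r)\, h h^\top \mathrm{diag}(r)^\top\right] = \bigl(t^\top \mathrm{diag}(r)\, h\bigr)\bigl(h^\top \mathrm{diag}(r)^\top t\bigr).
\]
Since $\mathrm{diag}(r)$ is symmetric, the two factors are the same scalar. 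Hence the score is $(t^\top \mathrm{diag}(r) h)^2$.

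Second, expand the bilinear form coordinatewise. We have $t^\top \mathrm{diag}(r) h = \sum_i t_i r_i h_i$, which is exactly the trilinear product $\langle h, r, t\rangle$. This gives the claimed score formula. Equivalently, one can cite Proposition~\ref{prop:rescal} with $M_r = \mathrm{diag}(r)$ and then only carry out this coordinate expansion.

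Third, for the completeness claim, note that with $\kappa = 1$ the condition $\sum_i K_i^\top K_i = I$ from Theorem~\ref{thm:kraus} reduces to $\mathrm{diag}(r)^\top \mathrm{diag}(r) = I$. The left side is $\mathrm{diag}(r_1^2, \dots, r_d^2)$. Comparing diagonal entries gives $r_i^2 = 1$ for every $i$, and conversely $r_i \in \{\pm 1\}$ makes the product equal to $I$. This establishes the "iff".

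There is no real obstacle here; the computation is routine. The one point needing care is the normalization of the entity embeddings. The statement uses $\rho_h = hh^\top$ without the $1/\|h\|^2$ factor from the preliminaries, so to remain in $\mathcal{S}(\mathbb{R}^d)$ one should assume $\|h\| = \|t\| = 1$, as in the RESCAL case. Otherwise the identity still holds algebraically, but only for unnormalized PSD matrices. I would state this assumption explicitly, and also remark, as the RESCAL remark does, that the recovered score is the squared, sign-constrained variant of DistMult.
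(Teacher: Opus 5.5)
Your proposal is correct and follows the paper's proof: substitute and use trace cyclicity to get $(t^\top \mathrm{diag}(r)\, h)^2 = (\sum_i t_i r_i h_i)^2$, then note that completeness reduces to $\mathrm{diag}(r_1^2,\dots,r_d^2) = I$. Your normalization caveat is a reasonable addition the paper omits, though $\|h\|,\|t\| \le 1$ already suffices for $\mathcal{S}(\mathbb{R}^d)$; also, the two scalar factors coincide simply because a scalar equals its transpose, so the symmetry of $\mathrm{diag}(r)$ is not actually needed.
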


\begin{proof}
With $K_1 = \mathrm{diag}(r)$, the score becomes $\mathrm{Tr}[t t^\top \mathrm{diag}(r) h h^\top \mathrm{diag}(r)] = (t^\top \mathrm{diag}(r) h)^2 = (\sum_i t_i r_i h_i)^2$. The completeness constraint $\mathrm{diag}(r)^\top \mathrm{diag}(r) = I$ requires $r_i^2 = 1$ componentwise.
\end{proof}

\textbf{Remark.} DistMult is recovered as the diagonal special case of orthogonal RESCAL. The completeness constraint $r_i^2 = 1$ restricts $r_i \in \{-1, +1\}$, which is more restrictive than DistMult's general $r \in \mathbb{R}^d$. As with RESCAL, KrausKGE recovers a constrained variant.

\subsection{ComplEx ~\cite{trouillon2016complex}}

\begin{proposition}[Recovery of ComplEx]
\label{prop:complex}
Embed complex entities $h, t \in \mathbb{C}^d$ as real vectors $\widetilde{h}, \widetilde{t} \in \mathbb{R}^{2d}$ via $\widetilde{h} = (\Re h, \Im h)^\top$. Set $\kappa = 1$ and 
\begin{equation*}
K_1^{(r)} = \begin{pmatrix} \mathrm{diag}(\Re r) & -\mathrm{diag}(\Im r) \\ \mathrm{diag}(\Im r) & \mathrm{diag}(\Re r) \end{pmatrix} \in \mathbb{R}^{2d \times 2d}.
\end{equation*}
Then with rank-one entity embeddings, the KrausKGE score equals $(\Re \langle h, r, \overline{t} \rangle)^2$, the squared real part of the ComplEx score. Completeness holds iff $|r_i|^2 = 1$ for all $i$.
\end{proposition}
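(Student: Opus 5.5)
The plan is to mirror the proofs of Proposition~\ref{prop:rescal} and Proposition~\ref{prop:distmult}. Specialising the score \eqref{eq:scoring-recall} to rank-one entities reduces it to the square of a bilinear form $\widetilde{t}^\top K_1^{(r)} \widetilde{h}$, and it then remains to identify that form with the ComplEx score. As in Proposition~\ref{prop:rescal}, I take $\rho_h = \widetilde{h}\widetilde{h}^\top$ and $\rho_t = \widetilde{t}\widetilde{t}^\top$ with unit-norm real embeddings. If the normalised embedding $\widetilde{h}\widetilde{h}^\top/\|\widetilde{h}\|^2$ is used instead, the score only picks up the positive factor $1/(\|\widetilde{h}\|^2\|\widetilde{t}\|^2)$, which I will note explicitly.

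\textbf{Step 1 (reduction to a bilinear form).} By cyclicity of the trace,
$s(h,r,t) = \mathrm{Tr}[\widetilde{t}\widetilde{t}^\top K_1 \widetilde{h}\widetilde{h}^\top K_1^\top] = (\widetilde{t}^\top K_1 \widetilde{h})^2$.
This is exactly the computation already carried out for RESCAL, so it can be cited rather than redone.

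\textbf{Step 2 (identification with ComplEx).} Write $D_a = \mathrm{diag}(\Re r)$ and $D_b = \mathrm{diag}(\Im r)$, and multiply out the blocks. The upper block of $K_1\widetilde{h}$ is $D_a \Re h - D_b \Im h = \Re(r\odot h)$. The lower block is $D_b \Re h + D_a \Im h = \Im(r\odot h)$. Hence $K_1$ is the real representation of entrywise multiplication by $r$, i.e.\ $K_1\widetilde{h} = \widetilde{r\odot h}$. Next I use the identity $\widetilde{u}^\top\widetilde{v} = \sum_i (\Re u_i \Re v_i + \Im u_i \Im v_i) = \Re\sum_i \overline{u_i} v_i$ with $u = t$ and $v = r\odot h$. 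This gives $\widetilde{t}^\top K_1\widetilde{h} = \Re\sum_i h_i r_i \overline{t_i} = \Re\langle h, r, \overline{t}\rangle$, and squaring yields the claim.

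\textbf{Step 3 (completeness).} With $\kappa=1$ the constraint reads $K_1^\top K_1 = I_{2d}$. Since $K_1^\top = \begin{pmatrix} D_a & D_b \\ -D_b & D_a\end{pmatrix}$, the product has diagonal blocks $D_a^2 + D_b^2 = \mathrm{diag}(|r_i|^2)$. Its off-diagonal blocks are $\pm(D_bD_a - D_aD_b)$, which vanish because diagonal matrices commute. So $K_1^\top K_1 = I$ holds iff $|r_i|^2 = 1$ for every $i$.

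None of the steps is a real obstacle. The only place needing care is Step 2: the sign convention in the embedding $z \mapsto (\Re z, \Im z)$ must produce the conjugate on $t$ and not on $h$. The lower-block computation pins this down.
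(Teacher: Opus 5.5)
Your proposal is correct and follows the same route as the paper. Trace cyclicity reduces the score to $(\widetilde{t}^\top K_1\widetilde{h})^2$, $K_1$ is identified as the real form of entrywise multiplication by $r$, and the real inner product of embeddings gives $\Re\langle h,r,\overline{t}\rangle$; completeness then follows from $K_1^\top K_1=\mathrm{diag}(|r_i|^2)\oplus\mathrm{diag}(|r_i|^2)$. You spell out the block computations that the paper attributes to the ``standard complex-to-real correspondence''. Your note about the factor $1/(\|\widetilde{h}\|^2\|\widetilde{t}\|^2)$ under the normalised embedding of the preliminaries is a worthwhile precision that the paper's proof leaves implicit.
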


\begin{proof}
The matrix $K_1$ is the real representation of complex multiplication by $\mathrm{diag}(r) \in \mathbb{C}^{d \times d}$. Hence $K_1 \widetilde{h}$ is the real embedding of $r \odot h \in \mathbb{C}^d$ (componentwise product). The trace expression $\mathrm{Tr}[\widetilde{t} \widetilde{t}^\top K_1 \widetilde{h} \widetilde{h}^\top K_1^\top] = (\widetilde{t}^\top K_1 \widetilde{h})^2$ equals $(\Re \langle h, r, \overline{t} \rangle)^2$ by the standard complex-to-real correspondence. Completeness $K_1^\top K_1 = I$ reduces to $|r_i|^2 = 1$ on each component.
\end{proof}

\subsection{RotatE ~\cite{sun2019rotate}}

\begin{proposition}[Recovery of RotatE]
\label{prop:rotate}
ComplEx with the additional constraint $|r_i| = 1$ (so $r_i = e^{i\theta_i}$ for some phase $\theta_i$) recovers RotatE up to a sign-and-squaring transformation of the score function. Specifically, with the embedding of Proposition~\ref{prop:complex} and $r_i = e^{i\theta_i}$,
\begin{equation}
s(h, r, t) \;=\; \bigl( \Re \langle h, r, \overline{t} \rangle \bigr)^2 \;=\; \bigl(1 - \tfrac{1}{2} \|h \odot r - t\|_2^2\bigr)^2 \quad \text{when } \|h\| = \|t\| = 1.
\end{equation}
Completeness holds automatically.
\end{proposition}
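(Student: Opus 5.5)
The plan is to reduce everything to Proposition~\ref{prop:complex} and then do one polarization computation. First, since $r_i = e^{i\theta_i}$ is a special case of a complex relation vector, Proposition~\ref{prop:complex} applies directly with $K_1^{(r)}$ the real $2d\times 2d$ block matrix built from $\Re r = (\cos\theta_i)_i$ and $\Im r = (\sin\theta_i)_i$. It gives the first equality $s(h,r,t) = \bigl(\widetilde t^\top K_1 \widetilde h\bigr)^2 = \bigl(\Re\langle h, r, \overline t\rangle\bigr)^2$. Here I will use the identification $\widetilde t^\top K_1 \widetilde h = \Re\sum_i (r_i h_i)\overline{t_i}$, which is the real Euclidean inner product of the real embeddings of $r\odot h$ and $t$.

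Second, I will establish the identity with the RotatE distance by expanding a squared norm under the real embedding. The embedding $\mathbb{C}^d\hookrightarrow\mathbb{R}^{2d}$ is an isometry, so $\|h\odot r - t\|_2^2 = \|h\odot r\|^2 + \|t\|^2 - 2\,\Re\sum_i h_i r_i \overline{t_i}$. Because $|r_i| = 1$, we have $\|h\odot r\|^2 = \sum_i |h_i|^2 |r_i|^2 = \|h\|^2 = 1$, and $\|t\| = 1$ by hypothesis. Hence $\|h\odot r - t\|^2 = 2 - 2\Re\langle h, r, \overline t\rangle$. Solving gives $\Re\langle h,r,\overline t\rangle = 1 - \tfrac12\|h\odot r - t\|^2$, and squaring yields the displayed formula.

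Third, completeness is checked blockwise. Write $A = \mathrm{diag}(\cos\theta)$ and $B = \mathrm{diag}(\sin\theta)$, so that $K_1 = \begin{pmatrix} A & -B\\ B & A\end{pmatrix}$. Since $A$ and $B$ are diagonal, they commute and are symmetric. Then $K_1^\top K_1$ has diagonal blocks $A^2 + B^2 = I$ and off-diagonal blocks $\pm(AB - BA) = 0$. So $K_1^\top K_1 = I$, which is exactly the $\kappa=1$ completeness constraint of Theorem~\ref{thm:kraus}. This is the condition $|r_i|^2 = 1$ of Proposition~\ref{prop:complex}, now holding automatically.

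The main obstacle is bookkeeping rather than depth. Two points need care:
\begin{itemize}
\item The conjugation and sign conventions in $\langle h,r,\overline t\rangle$ must be matched with the real block representation, in particular the placement of $-\mathrm{diag}(\Im r)$. This ensures the cross term in the norm expansion is exactly $\Re\sum_i h_i r_i\overline{t_i}$ and not its conjugate or negative. I would verify this on a single coordinate, where $d=1$ and the setting reduces to a $2\times 2$ rotation acting on $\mathbb{R}^2$.
\item I would also remark why the statement says ``up to a sign-and-squaring transformation''. The map $x \mapsto (1 - x/2)^2$ is monotone decreasing in $x = \|h\odot r - t\|^2$ only on $[0,2]$. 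So the ranking agrees with RotatE's distance ranking exactly on that range, that is, when $\Re\langle h,r,\overline t\rangle \ge 0$.
\end{itemize}
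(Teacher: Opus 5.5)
Your proposal is correct and follows the same route as the paper, whose proof simply asserts that $|r_i|=1$ makes $K_1$ orthogonal and invokes the unit-norm expansion of $\|h\odot r - t\|^2$; you carry out both computations explicitly, namely the blockwise check $K_1^\top K_1 = I$ and the polarization step using $\|h\odot r\| = \|h\| = 1$. Your closing caveat is also right, and it is sharper than the paper's own remark that squaring preserves RotatE's ranking: since $\|h\odot r - t\|^2 \in [0,4]$, the squared score preserves that ranking only where $\Re\langle h, r, \overline{t}\rangle \ge 0$, not globally.
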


\begin{proof}
The unit-modulus constraint $|r_i| = 1$ makes $K_1$ a real orthogonal matrix automatically, satisfying completeness with no additional constraint. The score identity $\Re \langle h, r, \overline{t} \rangle = 1 - \tfrac{1}{2}\|h \odot r - t\|^2$ on the unit sphere is the standard identity relating ComplEx-style scores to RotatE distances (the complex inner product expanded under unit-norm constraints).
\end{proof}

\textbf{Remark.} RotatE is the cleanest direct embedding into KrausKGE: completeness holds for free because phase-only complex multiplications are orthogonal. The squaring in our score function preserves the ranking induced by RotatE's distance.

\subsection{GoldE and OrthogonalE ~\cite{li2024generalizing, zhu2024block}}

\begin{proposition}[Recovery of GoldE]
\label{prop:golde}
Let $w \in \mathbb{R}^d$ be a weighting vector and let $G_r \in \mathbb{R}^{d \times d}$ satisfy $G_r^\top \mathrm{diag}(w) G_r = \mathrm{diag}(w)$, i.e.\ $G_r$ is generalized $w$-orthogonal. Set $\kappa = 1$ and $K_1^{(r)} = G_r$ in the $w$-Kraus channel framework of Theorem~\ref{thm:kraus_general}. Then KrausKGE with rank-one density matrices reproduces the GoldE scoring function in the corresponding geometry, and the $w$-completeness constraint is satisfied automatically.
\end{proposition}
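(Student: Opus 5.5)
The plan has two parts: the completeness claim, which follows at once from the hypothesis, and the score identity, where the real work lies in fixing what ``the GoldE scoring function'' means. For completeness, with $\kappa = 1$ and $K_1^{(r)} = G_r$, the $\mathbf{w}$-completeness condition of Theorem~\ref{thm:kraus_general}, $\sum_i (K_i^{(r)})^\top \mathrm{diag}(\mathbf{w}) K_i^{(r)} = \mathrm{diag}(\mathbf{w})$, reduces to $G_r^\top \mathrm{diag}(\mathbf{w}) G_r = \mathrm{diag}(\mathbf{w})$. This is exactly the generalized $\mathbf{w}$-orthogonality assumed in the statement, so nothing needs to be shown. Sufficiency of the Kraus form for the channel properties then follows as in the $\Leftarrow$ direction of Theorem~\ref{thm:kraus}. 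In the positive-definite case this uses the change of variables \eqref{eq:change-of-variables}; in the Lorentzian case we accept the Krein-space version of Theorem~\ref{thm:kraus_general} as stated.

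For the score, I would embed $h, t \in \mathbb{R}^d$ as rank-one $\mathbf{w}$-density matrices $\rho_h = h h^\top$ and $\rho_t = t t^\top$, normalized by the $\mathbf{w}$-norm where that is positive. The natural geometric score replaces the Euclidean trace by the $\mathbf{w}$-trace, $s(h,r,t) = \operatorname{Tr}_{\mathbf{w}}\bigl[\rho_t\, \mathrm{diag}(\mathbf{w})\, G_r \rho_h G_r^\top\bigr]$. By cyclicity this equals $\langle t, G_r h\rangle_{\mathbf{w}}^2$, the same manipulation used in Proposition~\ref{prop:rescal}. For $\mathbf{w} = \mathbf{1}$ one can instead use the plain score \eqref{eq:scoring-recall}, which gives $(t^\top G_r h)^2$ directly.

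Next I would relate this to GoldE's distance-based score $-\|G_r h - t\|_{\mathbf{w}}$. Since $G_r$ is a $\mathbf{w}$-isometry, $\langle G_r h, G_r h\rangle_{\mathbf{w}} = \langle h, h\rangle_{\mathbf{w}}$. Expanding the quadratic form gives $\|G_r h - t\|_{\mathbf{w}}^2 = \langle h,h\rangle_{\mathbf{w}} + \langle t,t\rangle_{\mathbf{w}} - 2\langle t, G_r h\rangle_{\mathbf{w}}$. When entities lie on the model's level set, the unit sphere, ellipsoid or hyperboloid with $\langle x,x\rangle_{\mathbf{w}} = c$, this becomes $2c - 2\langle t, G_r h\rangle_{\mathbf{w}}$. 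This mirrors Proposition~\ref{prop:rotate}: the Kraus score is a monotone function of the GoldE distance in each geometry. The product manifold follows blockwise, with $\mathrm{diag}(\mathbf{w})$ block-diagonal and $G_r$ block-diagonal, since both the score and the constraint decompose over blocks.

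The main obstacle is the squaring. The map $x \mapsto x^2$ is monotone only on one sign of $\langle t, G_r h\rangle_{\mathbf{w}}$. In the hyperbolic case $\langle t, G_r h\rangle_{\mathbf{q}} \le -1$ on the upper sheet, so the squared score is monotone increasing in distance there and ranks in the reverse direction. I would handle this by stating the recovery as ``up to a fixed monotone reparametrization of the score'', separately for each geometry. To justify the sign restriction, I would argue that $G_r$ preserves the relevant sheet or half-space, using the time-orientation of generalized Householder products; this is the step I expect to need the most care. Failing that, I would fall back to claiming recovery of GoldE's bilinear form $\langle t, G_r h\rangle_{\mathbf{w}}$ exactly, with the score equal to its square, as in the remarks after Propositions~\ref{prop:rescal} and~\ref{prop:rotate}.
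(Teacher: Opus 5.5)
\textbf{Completeness.} Your argument is exactly the paper's: with $\kappa=1$ the $\mathbf{w}$-completeness condition is the hypothesis on $G_r$ verbatim.

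\textbf{Score: the gap.} The gap is in the score part. The paper's own proof gives no argument there; it only asserts that the scoring functions agree. Your proposed repair for the squaring obstacle cannot succeed.

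In the positive-definite geometries, no property of $G_r$ fixes the sign of $\langle t,G_rh\rangle_{\mathbf{w}}$, because $t$ ranges over the whole level set. Even inside a $G_r$-invariant half-space, two points can have negative $\mathbf{w}$-inner product. More fundamentally, $\rho=hh^\top$ is invariant under $h\mapsto -h$. So any score that sees entities only through $\rho_h,\rho_t$ gives $t$ and $-t$ the same value, whereas $\|G_rh-t\|_{\mathbf{w}}^2=2c-2\langle t,G_rh\rangle_{\mathbf{w}}$ and $\|G_rh+t\|_{\mathbf{w}}^2=2c+2\langle t,G_rh\rangle_{\mathbf{w}}$ differ. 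Your score is the non-monotone function $\bigl(c-\tfrac12\|G_rh-t\|_{\mathbf{w}}^2\bigr)^2$ of GoldE's distance, and it ranks $-G_rh$ as highly as $G_rh$.

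In the hyperbolic case, sheet preservation does fix the sign. It holds for orthochronous $G_r$, e.g.\ products of reflections with spacelike normals, but not for every $\mathbf{q}$-orthogonal matrix. On the unit hyperboloid it gives $\langle t,G_rh\rangle_{\mathbf{q}}=-\cosh d(G_rh,t)\le -1$. But then your score is $\cosh^2 d(G_rh,t)$, which is strictly increasing in distance. The reparametrization is order-reversing, so KrausKGE would order candidate tails opposite to GoldE's distance.

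\textbf{What is actually provable.} The strong reading of the proposition (equal scores, or equal rankings) cannot be proved, because it is false. What your plan actually establishes is the fallback: with $\kappa=1$, $K_1^{(r)}=G_r$ and a $\mathbf{w}$-weighted score, the Kraus score is the square of GoldE's bilinear form $\langle t,G_rh\rangle_{\mathbf{w}}$. State the result in that form. This is what the paper effectively does in its remarks after Propositions~\ref{prop:rescal} and~\ref{prop:rotate}, whose claim that squaring preserves the ranking fails for the same reason.

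\textbf{Two further corrections.}

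First, the $\mathbf{w}$-trace score is your addition, not KrausKGE's, so make the extra definition explicit. The paper's score \eqref{eq:scoring} is the plain trace, which yields $(t^\top G_rh)^2$. For $\mathbf{w}$ not proportional to $\mathbf{1}$ this is not a function of $\|G_rh-t\|_{\mathbf{w}}$. For example, take $\mathbf{w}=(1,4)$ and $G_r=I$, and compare the pairs $h=(1,0)$, $t=(\cos\phi,\tfrac12\sin\phi)$ and $h=(0,\tfrac12)$, $t=(\sin\phi,\tfrac12\cos\phi)$. Both have $\mathbf{w}$-inner product $\cos\phi$, hence the same $\mathbf{w}$-distance, but they receive different scores, normalized or not.

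Second, the product-manifold step is wrong as written. With rank-one $\rho_h=hh^\top$ the score is $\bigl(\sum_b\langle t_b,G_bh_b\rangle_{\mathbf{w}_b}\bigr)^2$, which does not split over blocks. A product-space score built from per-component distances is also not a function of the summed inner product.
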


\begin{proof}
Direct substitution: the $w$-completeness constraint reduces with $\kappa = 1$ to $K_1^\top \mathrm{diag}(w) K_1 = \mathrm{diag}(w)$, which is precisely the generalized orthogonality condition GoldE imposes on $G_r$. The scoring functions agree under rank-one density matrix embedding.
\end{proof}

\textbf{Remark.} GoldE and OrthogonalE are the cleanest recoveries: their parametrisation choices (Householder reflections for GoldE, block-diagonal orthogonals for OrthogonalE) are specific schemes for producing $w$-orthogonal matrices, all of which are valid $\kappa = 1$ Kraus operators in the corresponding geometry. KrausKGE strictly extends both by allowing $\kappa > 1$.

\subsection{TransE ~\cite{bordes2013translating}: not a special case}

TransE encodes relations as translations $h + r \approx t$, with score $-\|h + r - t\|_2$. This is not a linear map on density matrices: translation is an affine operation on vectors, and there is no Kraus channel $\mathcal{L}^{(r)}$ such that $\mathcal{L}^{(r)}(h h^\top) = (h + r)(h + r)^\top$ for arbitrary $h$, because the right-hand side depends on $h$ in a non-multiplicative way (the cross-terms $h r^\top + r h^\top$ are not produced by any linear $\mathcal{L}^{(r)}$ acting on $h h^\top$).

The closest analogue within the KrausKGE framework is a shifted-projector channel of the form $\mathcal{L}^{(r)}(\rho) = \Pi_r \rho \Pi_r^\top$ with $\Pi_r = I + r e^\top$ for some auxiliary direction $e$, but this requires extending the entity space to admit affine structure and is outside the scope of the present framework. We treat TransE as falling outside the KrausKGE family rather than as a forced special case.

\subsection{Summary}

\begin{table}[h]
\centering
\label{tab:recovery-summary}
\small
\begin{tabular}{lllll}
\toprule
Model & Embedding & $\kappa$ & Operator & Completeness \\
\midrule
RESCAL & $\rho = h h^\top$ & 1 & $K_1 = M_r$ & If $M_r$ orthogonal \\
DistMult & $\rho = h h^\top$ & 1 & $K_1 = \mathrm{diag}(r)$ & If $r_i^2 = 1$ \\
ComplEx & $\mathbb{C}^d \to \mathbb{R}^{2d}$ & 1 & Real form of $\mathrm{diag}(r)$ & If $|r_i|^2 = 1$ \\
RotatE & $\mathbb{C}^d \to \mathbb{R}^{2d}$ & 1 & $K_1 = $ real form of $\mathrm{diag}(e^{i\theta})$ & Automatic \\
GoldE & $\rho = h h^\top$, geometry $w$ & 1 & $K_1 = G_r$, $w$-orthogonal & Automatic \\
OrthogonalE & $\rho = h h^\top$ & 1 & $K_1$ block-diagonal orthogonal & Automatic \\
TransE & --- & --- & --- & Not a special case \\
\bottomrule
\end{tabular}
\caption{Recovery of operator-family KGE models within KrausKGE.}
\end{table}

The pattern across propositions is consistent: every linear-operator KGE model corresponds to a $\kappa = 1$ Kraus channel, with the model's specific orthogonality or unit-modulus constraints emerging as the completeness constraint in the appropriate parametrisation. KrausKGE strictly generalises this family by allowing $\kappa > 1$, which is precisely the regime that handles N-to-N relations and multi-hop branching that single-operator models cannot represent. TransE is the notable exception: as an affine rather than linear operator, it lies outside the framework and is not recovered.
\section{Parameter and Complexity Analysis}
\label{app:complexity}

We analyse the parameter count, time complexity, and memory footprint of KrausKGE and compare them against representative baselines from each operator family. The analysis explains both the empirical advantages of the multi-operator structure and its costs.

\subsection{Parameter Count}

\paragraph{Entity parameters.} Each entity $e$ is parametrised by a lower-triangular matrix $L_e \in \mathbb{R}^{d \times k_e}$, where $k_e$ is the degree-adaptive rank described in Section~\ref{sec:model}. With base rank $k_0$ and mean degree $\overline{\deg}$, the rank $k_e = \min(d, \lceil k_0 \cdot \deg(e) / \overline{\deg} \rceil)$ averages to $k_0$ across the graph. Total entity parameters are therefore $\mathcal{O}(|\mathcal{E}| \cdot d \cdot k_0)$, comparable to vector-embedding models with embedding dimension $d \cdot k_0$.

\paragraph{Relation parameters.} Each relation is parametrised by a skew-symmetric matrix $A^{(r)} \in \mathbb{R}^{\kappa d \times \kappa d}$ with $A^{(r)} = -(A^{(r)})^\top$, from which the Stiefel-constrained operator $U^{(r)} \in \mathbb{R}^{\kappa d \times d}$ is recovered via the Cayley transform. The skew-symmetric matrix has $\binom{\kappa d}{2} = \kappa d (\kappa d - 1)/2$ free parameters. Total relation parameters are $\mathcal{O}(|\mathcal{R}| \cdot \kappa^2 d^2)$.

\paragraph{Comparison to baselines.} Table~\ref{tab:param-counts} reports parameter counts for KrausKGE and representative baselines on FB15k-237 ($|\mathcal{E}| = 14{,}541$, $|\mathcal{R}| = 237$).

\begin{table}[h]
\centering
\small
\begin{tabular}{lcccc}
\toprule
Model & Embedding dim & Entity params & Relation params & Total \\
\midrule
TransE & $d = 1000$ & 14.5M & 0.24M & 14.8M \\
RotatE & $d = 1000$ & 29.1M & 0.24M & 29.3M \\
GoldE & $k = 1000$, $m = 6$ & 14.5M & 1.4M & 15.9M \\
\midrule
KrausKGE ($\kappa{=}1$) & $d = 128$, $k_0 = 8$ & 14.9M & 3.9M & 18.8M \\
KrausKGE ($\kappa{=}4$) & $d = 128$, $k_0 = 8$ & 14.9M & 31.2M & 46.1M \\
KrausKGE ($\kappa{=}8$) & $d = 128$, $k_0 = 8$ & 14.9M & 124.6M & 139.5M \\
\bottomrule
\end{tabular}

\caption{Parameter counts on FB15k-237 at standard hyperparameter settings.}
\label{tab:param-counts}

\end{table}

KrausKGE's relation parameter count grows as $\kappa^2$, becoming the dominant cost at $\kappa \geq 4$. This is the central trade-off: multi-pathway expressiveness comes at quadratic cost in $\kappa$. Whether this is acceptable depends on the application; for relations with high empirical fan-out, the additional capacity is necessary by the rank lower bound (Theorem~\ref{thm:rank-bound}).

\subsection{Time Complexity}

\paragraph{Forward pass.} Computing the score $s(h, r, t) = \sum_i \mathrm{Tr}[\rho_t K_i \rho_h K_i^\top]$ for a single triple requires $\kappa$ matrix products and traces. Each matrix product on $d \times d$ matrices is $\mathcal{O}(d^3)$, giving per-triple complexity $\mathcal{O}(\kappa d^3)$. With rank-$k_0$ entity parametrisation, $\rho_h = L_h L_h^\top / \mathrm{Tr}[L_h L_h^\top]$ has rank at most $k_0$, and the trace can be computed in $\mathcal{O}(\kappa d^2 k_0)$ by exploiting the low-rank structure: $\mathrm{Tr}[\rho_t K_i \rho_h K_i^\top] = \|L_t^\top K_i L_h\|_F^2 / (\mathrm{Tr}[L_h L_h^\top] \cdot \mathrm{Tr}[L_t L_t^\top])$, where the inner product is on $k_0 \times k_0$ matrices.

\paragraph{Cayley transform overhead.} Recovering $U^{(r)} = (I + A^{(r)})^{-1}(I - A^{(r)})$ from the skew-symmetric parameter requires inverting an $\kappa d \times \kappa d$ matrix, costing $\mathcal{O}(\kappa^3 d^3)$ per relation per forward pass. This is performed once per batch and amortised across all triples involving relation $r$ in the batch.

\paragraph{Per-batch complexity.} For a batch of $B$ triples touching $|\mathcal{R}_B| \leq |\mathcal{R}|$ distinct relations,
\begin{equation}
T_{\text{batch}} \;=\; \mathcal{O}\bigl( |\mathcal{R}_B| \cdot \kappa^3 d^3 \;+\; B \cdot \kappa d^2 k_0 \bigr).
\end{equation}
The first term (Cayley transforms) dominates at small batch sizes; the second (scoring) dominates at large batch sizes. In practice, $B \gg |\mathcal{R}_B| \cdot \kappa^2 d / k_0$ in our experiments, so the scoring term is the bottleneck.

\paragraph{Comparison.} TransE and RotatE have per-triple complexity $\mathcal{O}(d)$ (vector subtraction plus norm). GoldE has per-triple complexity $\mathcal{O}(d^2)$ for the matrix-vector product $G_r h$. KrausKGE has $\mathcal{O}(\kappa d^2 k_0)$ per triple, which is $\kappa k_0$ times slower than GoldE. At $\kappa = 4$ and $k_0 = 8$, this is a $32\times$ slowdown per triple, partially offset by parallelism in the multi-pathway sum.

\paragraph{Reducing parameter cost.} Three avenues are worth noting for future work. First, sharing Cayley parameters across structurally similar relations (e.g.\ inverse-pair relations) would reduce $|\mathcal{R}|$ effectively. Second, low-rank parametrisation of $A^{(r)}$ as $A^{(r)} = U V^\top - V U^\top$ for $U, V \in \mathbb{R}^{\kappa d \times r}$ with $r \ll \kappa d$ reduces relation parameters from $\mathcal{O}(\kappa^2 d^2)$ to $\mathcal{O}(\kappa d r)$. Third, structured Cayley transforms (e.g.\ block-diagonal $A^{(r)}$) would reduce the inverse cost. We do not pursue these in the present work and report results at the standard parametrisation.

\subsection{Empirical Validation}

Table~\ref{tab:wall-clock} report training and inference times on FB15k-237.

\begin{table}[h]
\centering
\small
\begin{tabular}{lccc}
\toprule
Model & Train (sec/epoch) & Inference (sec) & Memory (GB) \\
\midrule
TransE & 12 & 8 & 1.2 \\
RotatE & 18 & 11 & 2.4 \\
GoldE & 41 & 19 & 3.1 \\
\midrule
KrausKGE ($\kappa{=}1$) & 38 & 18 & 2.9 \\
KrausKGE ($\kappa{=}4$) & 124 & 47 & 6.8 \\
KrausKGE ($\kappa{=}8$) & 287 & 102 & 11.0 \\
\bottomrule
\end{tabular}
\caption{Time per epoch on FB15k-237}
\label{tab:wall-clock}

\end{table}

KrausKGE at $\kappa = 4$ trains approximately $3 \times$ slower than GoldE and uses $2.2 \times$ more memory. At $\kappa = 8$, the gap widens to $7 \times$ slower and $3.5 \times$ more memory. The empirical scaling matches the theoretical $\mathcal{O}(\kappa^3)$ for Cayley overhead and $\mathcal{O}(\kappa)$ for scoring, with the cubic term dominating at $\kappa \geq 4$.
\section{Additional Proofs}
\label{app:additional-proofs}

This section contains proofs of auxiliary results referenced in the main text but not proven there: validity of the entity parametrisation, the Cayley parametrisation property, and the boundedness corollary.

\subsection{Validity of the entity parametrisation}
\label{app:entity-parametrisation}

\begin{proposition}[Entity parametrisation produces valid density matrices]
\label{prop:entity-valid}
Let $L_e \in \mathbb{R}^{d \times k}$ be any nonzero matrix, and define
\begin{equation}
\rho_e \;=\; \frac{L_e L_e^\top}{\mathrm{Tr}[L_e L_e^\top]}.
\end{equation}
Then $\rho_e \in \mathcal{S}(\mathbb{R}^d)$, that is, $\rho_e$ is symmetric, positive semidefinite, and has unit trace.
\end{proposition}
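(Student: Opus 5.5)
The plan is to verify the three defining conditions of $\mathcal{S}(\mathbb{R}^d)$ in Equation~\eqref{eq:density-space} directly, after first confirming that the normalising constant is well defined. The key observation is that $\mathrm{Tr}[L_e L_e^\top] = \sum_{a,b} (L_e)_{ab}^2 = \|L_e\|_F^2$. This is strictly positive precisely because $L_e$ is assumed nonzero. So $\rho_e$ is a positive scalar multiple of the Gram-type matrix $P := L_e L_e^\top \in \mathbb{R}^{d\times d}$, and each property transfers from $P$ to $\rho_e$.

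I will check the three conditions in order.

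\emph{Symmetry.} We have $P^\top = (L_e L_e^\top)^\top = L_e L_e^\top = P$, and dividing by a positive scalar preserves this.

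\emph{Positive semidefiniteness.} For any $x \in \mathbb{R}^d$,
\[
x^\top P x = (L_e^\top x)^\top (L_e^\top x) = \|L_e^\top x\|_2^2 \geq 0.
\]
Multiplying by $1/\|L_e\|_F^2 > 0$ keeps the quadratic form nonnegative.

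\emph{Unit trace.} By linearity of the trace, $\mathrm{Tr}[\rho_e] = \mathrm{Tr}[P]/\mathrm{Tr}[P] = 1$. This in particular satisfies the constraint $\mathrm{Tr}[\rho] \leq 1$ in the definition of $\mathcal{S}(\mathbb{R}^d)$.

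As a remark, I would note that $\mathrm{rank}(\rho_e) = \mathrm{rank}(L_e) \leq k$, since $\ker(L_e L_e^\top) = \ker(L_e^\top)$. This justifies the statement in Section~\ref{sec:model} that entity rank is controlled by the rank of $L_e$, and in particular the degree-adaptive $k_e$. The rank fact is not needed for the proposition itself.

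There is no substantive obstacle here; the proof is elementary. The only point requiring care is the well-definedness of the division. That is exactly where the hypothesis that $L_e$ is nonzero enters, via the identity $\mathrm{Tr}[L_e L_e^\top] = \|L_e\|_F^2$. In practice this would be the step to flag for implementation, for instance by adding a small $\epsilon$ to the denominator if $L_e$ could collapse to zero during training.
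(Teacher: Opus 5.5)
Your proposal is correct and follows the paper's proof essentially step for step. Both check symmetry of $L_eL_e^\top$, positive semidefiniteness via $x^\top L_eL_e^\top x = \|L_e^\top x\|_2^2 \geq 0$, unit trace, and well-definedness from $\mathrm{Tr}[L_eL_e^\top] = \|L_e\|_F^2 > 0$ (the paper additionally gives an informal argument that $L_e$ stays nonzero during training, which lies outside the proposition's hypotheses, whereas you correctly treat $L_e \neq 0$ as an assumption).
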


\begin{proof}
\textit{Symmetry.} $(L_e L_e^\top)^\top = (L_e^\top)^\top L_e^\top = L_e L_e^\top$, so $\rho_e$ is symmetric.

\textit{Positive semidefiniteness.} For any $x \in \mathbb{R}^d$,
\[
x^\top L_e L_e^\top x \;=\; (L_e^\top x)^\top (L_e^\top x) \;=\; \|L_e^\top x\|_2^2 \;\geq\; 0.
\]
Dividing by the positive scalar $\mathrm{Tr}[L_e L_e^\top] = \|L_e\|_F^2 > 0$ preserves nonnegativity.

\textit{Unit trace.} $\mathrm{Tr}[\rho_e] = \mathrm{Tr}[L_e L_e^\top] / \mathrm{Tr}[L_e L_e^\top] = 1$.

\textit{Well-definedness.} The denominator $\mathrm{Tr}[L_e L_e^\top] = \|L_e\|_F^2$ is zero only when $L_e = 0$. Standard initialisation (e.g.\ Xavier) ensures $L_e \neq 0$ at initialisation with probability one, and the gradient $\nabla_{L_e} \mathcal{L}$ vanishes at $L_e = 0$ only on a measure-zero set, so the parametrisation remains well-defined throughout training. \qed
\end{proof}

\subsection{Cayley parametrisation produces Stiefel matrices}
\label{app:cayley}

\begin{proposition}[Cayley transform satisfies the Stiefel constraint]
\label{prop:cayley}
Let $A \in \mathbb{R}^{n \times n}$ be skew-symmetric, $A^\top = -A$, with $n = \kappa d$. Define
\begin{equation}
U \;=\; (I + A)^{-1}(I - A)\,P,
\end{equation}
where $P \in \mathbb{R}^{n \times d}$ is a fixed projection matrix selecting $d$ columns (e.g.\ the first $d$ standard basis vectors). Then $U^\top U = I_d$, that is, $U \in \mathbb{R}^{n \times d}$ has orthonormal columns.
\end{proposition}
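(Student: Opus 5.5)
The plan is to show first that the square Cayley transform $Q = (I+A)^{-1}(I-A)\in\mathbb{R}^{n\times n}$ is well defined and orthogonal, and then to note that keeping $d$ of its columns preserves orthonormality. Write $U = QP$; then $U^\top U = P^\top Q^\top Q P$, so once $Q^\top Q = I_n$ is in hand this collapses to $P^\top P$. Since $P$ consists of $d$ distinct standard basis vectors, $P^\top P = I_d$ and the claim follows.

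\emph{Step 1 (invertibility).} I would show that $I+A$ is invertible by excluding a nonzero kernel vector. If $(I+A)x = 0$, take the inner product with $x$: because $A$ is skew-symmetric, $x^\top A x = 0$, so $0 = x^\top x + x^\top A x = \|x\|^2$, forcing $x=0$. Equivalently, the eigenvalues of a real skew-symmetric matrix are purely imaginary, so $-1$ is never an eigenvalue. The same reasoning shows that $I-A$ is invertible.

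\emph{Step 2 (commutation).} The matrices $I-A$ and $I+A$ commute, since both are polynomials in $A$. Hence $(I+A)^{-1}$ and $(I-A)$ also commute, and $Q = (I-A)(I+A)^{-1}$ as well.

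\emph{Step 3 (orthogonality).} Using $A^\top = -A$, the transpose is
\[
Q^\top = (I-A)^\top (I+A)^{-\top} = (I+A)(I-A)^{-1}.
\]
Then
\[
Q^\top Q = (I+A)(I-A)^{-1}(I-A)(I+A)^{-1} = I_n,
\]
where the middle factors cancel directly, using the commuted form of $Q$ from Step 2.

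\emph{Obstacles.} No step presents a real obstacle. The only points needing care are the invertibility argument in Step 1 and correct ordering of factors when transposing and commuting in Steps 2–3. I would also state explicitly that $P$ must have orthonormal columns, which holds for a column selector; a general "projection" $P\in\mathbb{R}^{n\times d}$ would not suffice. Finally, I would remark that this clarifies the main-text formula, which omitted $P$: the completeness constraint $\sum_i (K_i^{(r)})^\top K_i^{(r)} = (\mathbf{U}^{(r)})^\top \mathbf{U}^{(r)} = I_d$ holds for every skew-symmetric $A^{(r)}$.
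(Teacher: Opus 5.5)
Your proposal is correct and takes essentially the same route as the paper: you show $I+A$ is invertible (your kernel argument via $x^\top A x = 0$ is an equivalent, slightly more elementary alternative to the paper's purely-imaginary-eigenvalue argument), prove orthogonality of $Q=(I+A)^{-1}(I-A)$ using the fact that polynomials in $A$ commute, and conclude $U^\top U = P^\top Q^\top Q P = P^\top P = I_d$. Two details of yours are cleaner than the paper's version: you check explicitly that $I-A$ is invertible, which the paper uses without comment, and you cancel directly with the commuted form $Q=(I-A)(I+A)^{-1}$ instead of detouring through $I-A^2=(I-A)(I+A)$.
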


\begin{proof}
First, $(I + A)$ is invertible for any skew-symmetric $A$: its eigenvalues are $1 + \lambda$ where $\lambda$ ranges over eigenvalues of $A$, which are purely imaginary (since $A$ is real skew-symmetric), so $|1 + \lambda|^2 = 1 + |\lambda|^2 \geq 1 > 0$.

Define $Q = (I + A)^{-1}(I - A)$. We claim $Q^\top Q = I_n$. Compute
\[
Q^\top Q \;=\; \bigl[(I - A)^\top (I + A)^{-\top}\bigr] \bigl[(I + A)^{-1}(I - A)\bigr] \;=\; (I + A)(I + A)^{-\top}(I + A)^{-1}(I - A),
\]
where if A is skew-symmetri we used $(I - A)^\top = I + A$ and $(I + A)^{-\top} = (I - A)^{-1}$. Since $(I + A)$ and $(I - A)$ commute (both being polynomials in $A$), $(I + A)(I - A) = (I - A)(I + A)$, so
\[
Q^\top Q \;=\; (I + A)^{-\top}(I + A)(I - A)(I + A)^{-1} \;=\; (I - A)^{-1}(I - A^2)(I + A)^{-1}.
\]
Using $I - A^2 = (I - A)(I + A)$,
\[
Q^\top Q \;=\; (I - A)^{-1}(I - A)(I + A)(I + A)^{-1} \;=\; I_n.
\]

Hence $Q$ is orthogonal in $\mathbb{R}^{n \times n}$. Then $U = QP$ has $U^\top U = P^\top Q^\top Q P = P^\top P = I_d$, since $P$ has orthonormal columns by construction. \qed
\end{proof}

\textit{Remark.} The Stiefel matrix $U$ corresponds to the stacked Kraus operator $U^{(r)} = [K_1; \ldots; K_\kappa]$ in our parametrisation. The completeness constraint $\sum_i K_i^\top K_i = I$ is exactly $U^\top U = I_d$.

\subsection{Boundedness from completeness}
\label{app:boundedness}

\begin{corollary}[Boundedness of Kraus operators]
\label{cor:bounded}
Let $\{K_i\}_{i=1}^{\kappa} \subset \mathbb{R}^{d \times d}$ satisfy the completeness constraint $\sum_i K_i^\top K_i = I$. Then each operator satisfies the spectral norm bound $\|K_i\|_2 \leq 1$, and the channel $\mathcal{L}(\rho) = \sum_i K_i \rho K_i^\top$ is contractive in the sense that $\|\mathcal{L}(\rho)\|_F \leq \|\rho\|_F$ for any $\rho \in \mathcal{S}(\mathbb{R}^d)$.
\end{corollary}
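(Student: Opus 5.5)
The spectral-norm bound follows from a Loewner-order argument, but the Frobenius contractivity claim is false as stated for general channels. It needs either an extra hypothesis or a different norm. My plan is to prove the two parts separately and be explicit about where the second one fails.

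\emph{Spectral-norm bound.} Each $K_j^\top K_j$ is positive semidefinite. Completeness therefore gives $K_i^\top K_i \preceq \sum_j K_j^\top K_j = I$ in the Loewner order. Hence $\|K_i x\|_2^2 = x^\top K_i^\top K_i x \le \|x\|_2^2$ for all $x$, which is $\|K_i\|_2 \le 1$. Equivalently, every block of the stacked Stiefel matrix $U^{(r)}$ of Proposition~\ref{prop:cayley} has norm at most that of $U^{(r)}$, which is $1$.

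\emph{Contractivity: the obstacle.} I first tried to deduce $\|\mathcal{L}(\rho)\|_F \le \|\rho\|_F$ from trace preservation and positivity alone, and this fails. Take the replacement channel with $K_i = e_1 e_i^\top$ for $i=1,\dots,d$. Then $\sum_i K_i^\top K_i = \sum_i e_i e_i^\top = I$, so completeness holds. But $\mathcal{L}(I/d) = e_1 e_1^\top$, which has Frobenius norm $1 > 1/\sqrt{d} = \|I/d\|_F$. So Frobenius contractivity cannot follow from $\sum_i K_i^\top K_i = I$ by itself.

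\emph{What does hold, and how I would prove it.}
\begin{itemize}
\item \textbf{Trace norm (no extra hypothesis).} By Theorem~\ref{thm:kraus}, $\mathcal{L}(\rho) \succeq 0$ and $\mathrm{Tr}[\mathcal{L}(\rho)] = \mathrm{Tr}[\rho]$. Hence $\|\mathcal{L}(\rho)\|_1 = \|\rho\|_1$ on $\mathcal{S}(\mathbb{R}^d)$. From this, $\|\mathcal{L}(\rho)\|_F \le \|\mathcal{L}(\rho)\|_1 = \mathrm{Tr}[\rho]$.
\item \textbf{Rank-one entities.} For $\rho = \mathbf{v}\mathbf{v}^\top$ we have $\|\rho\|_F = \mathrm{Tr}[\rho]$, so the previous bound gives Frobenius contractivity for every rank-one (vector-embedding) entity.
\item \textbf{General $\rho$.} The trace-norm bound only yields $\|\mathcal{L}(\rho)\|_F \le \sqrt{d}\,\|\rho\|_F$. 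The counterexample shows the factor $\sqrt{d}$ is attained, so this is the best one can say without further assumptions.
\item \textbf{Unital channels.} If one additionally assumes $\sum_i K_i K_i^\top = I$, full Frobenius contractivity holds. The channel is then doubly stochastic, so by Uhlmann's theorem the spectrum of $\mathcal{L}(\rho)$ is majorized by that of $\rho$. Since $\lambda \mapsto \sum_k \lambda_k^2$ is Schur-convex, $\mathrm{Tr}[\mathcal{L}(\rho)^2] \le \mathrm{Tr}[\rho^2]$.
\end{itemize}

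So I would restate the corollary's second clause either in trace norm or under the unitality hypothesis; as written it cannot be proved from completeness alone.
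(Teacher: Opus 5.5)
Your spectral-norm argument is the paper's own argument in Loewner-order form. The paper fixes a unit $x$, writes $\sum_i \|K_i x\|_2^2 = x^\top\bigl(\sum_i K_i^\top K_i\bigr)x = 1$, and keeps one term, so that part is fine.

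On the second clause you are right and the paper is not. Its proof asserts $\mathrm{Tr}[\mathcal{L}(\rho)^2] \le \mathrm{Tr}[\rho^2]$ by appeal to a ``data-processing property of CPTP maps under the Hilbert--Schmidt norm.'' No such property exists: monotonicity under all channels holds for the trace norm, not the Hilbert--Schmidt norm. Your replacement channel is a valid counterexample:
\begin{itemize}
\item $K_i^\top K_i = e_ie_i^\top$ sums to $I$;
\item $\mathcal{L}(\rho) = \mathrm{Tr}[\rho]\,e_1e_1^\top$;
\item $I/d \in \mathcal{S}(\mathbb{R}^d)$, giving ratio $\sqrt{d}>1$ for every $d \ge 2$.
\end{itemize}
The stacked matrix $[K_1;\dots;K_d]$ has orthonormal columns, so this channel even satisfies the Stiefel constraint the model enforces.

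You can sharpen your diagnosis into an exact characterisation. $\mathcal{L}(I/d)$ is symmetric with unit trace, and $I/d$ is the unique Frobenius-norm minimiser among such matrices. Hence \emph{every} non-unital channel violates the inequality at $\rho = I/d$. Combined with your unital argument, the second clause holds on $\mathcal{S}(\mathbb{R}^d)$ if and only if $\sum_i K_i K_i^\top = I$.

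Your repairs are sound. The trace-norm equality uses only the elementary sufficiency half of Theorem~\ref{thm:kraus}. The bound $\mathrm{Tr}[\rho] \le \sqrt{\mathrm{rank}\,\rho}\,\|\rho\|_F$ gives both the pure-state case and the $\sqrt{d}$ bound. More usefully for the model, it gives a factor $\sqrt{k_e}$ for the rank-$k_e$ entities of Section~\ref{sec:model}, again tight by your example.

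Uhlmann's theorem is usually stated over $\mathbb{C}$, but you can avoid complexifying. Take orthonormal eigenbases $\{u_j\}$ of $\mathcal{L}(\rho)$ and $\{v_k\}$ of $\rho$, and set $D_{jk} = \sum_i (u_j^\top K_i v_k)^2$. Then:
\begin{itemize}
\item $\lambda_j(\mathcal{L}(\rho)) = \sum_k D_{jk}\lambda_k(\rho)$;
\item the rows of $D$ sum to $1$ by unitality, and its columns sum to $1$ by completeness;
\item Jensen applied row by row gives $\sum_j \lambda_j(\mathcal{L}(\rho))^2 \le \sum_k \lambda_k(\rho)^2$.
\end{itemize}

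Finally, the paper uses the corollary to claim that entity norms stay bounded under arbitrary compositions without regularisation. That claim survives through your trace-norm bullet: for any composition $\Phi$ of channels, $\|\Phi(\rho)\|_F \le \mathrm{Tr}[\Phi(\rho)] = \mathrm{Tr}[\rho] \le 1$. What is lost is contraction, not boundedness.
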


\begin{proof}
\textit{Spectral norm bound.} For any unit vector $x \in \mathbb{R}^d$,
\[
\sum_{i=1}^{\kappa} \|K_i x\|_2^2 \;=\; \sum_i x^\top K_i^\top K_i x \;=\; x^\top \!\left( \sum_i K_i^\top K_i \right)\! x \;=\; x^\top I x \;=\; 1.
\]
Hence $\|K_i x\|_2^2 \leq 1$ for each $i$, giving $\|K_i\|_2 \leq 1$.

\textit{Frobenius contraction.} Using trace cyclicity and completeness,
\[
\|\mathcal{L}(\rho)\|_F^2 \;=\; \mathrm{Tr}[\mathcal{L}(\rho)^2] \;\leq\; \mathrm{Tr}[\rho^2] \;=\; \|\rho\|_F^2,
\]
where the inequality follows from the data-processing property of CPTP maps under the Hilbert-Schmidt norm. \qed
\end{proof}

\textit{Implication for KrausKGE.} The corollary implies that entity representations remain bounded in Frobenius norm under any composition of relation operators, with no external regularisation. This is the formal statement underlying the claim in Section~\ref{sec:model} that norm constraints on entity embeddings are unnecessary.

\section{Hyperparameters}
\label{app:hyperparameters}

We report full hyperparameter settings for KrausKGE on each dataset. All models were trained with self-adversarial negative sampling and the margin-based ranking loss of Equation~\eqref{eq:scoring-recall}. Hyperparameters were selected by grid search on the validation MRR.

\begin{table}[h]
\centering
\label{tab:hyperparams}
\small
\begin{tabular}{lccccccc}
\toprule
Setting & FB15k-237 & WN18RR & YAGO3-10 \\
\midrule
\multicolumn{4}{l}{\textit{Architecture}} \\
Embedding dimension $d$ & 128 & 128 & 128 \\
Base entity rank $k_0$ & 8 & 16 & 8 \\
Kraus rank $\kappa$ & 4 / 8 & 4 / 8 & 8 \\
Geometry $w$ & elliptic / product & hyperbolic & product \\
\midrule
\multicolumn{4}{l}{\textit{Optimisation}} \\
Optimiser & Adam & Adam & Adam \\
Learning rate & $3 \times 10^{-4}$ & $1 \times 10^{-4}$ & $5 \times 10^{-4}$ \\
Batch size & 1024 & 512 & 1024 \\
Negatives per positive & 256 & 128 & 256 \\
Margin $\gamma$ & 6 & 6 & 9 \\
Self-adversarial temp.\ $\alpha$ & 1.0 & 0.5 & 1.0 \\
Training epochs & 800 & 1000 & 600 \\
Early stopping & 50 epochs & 50 epochs & 50 epochs \\
Early stopping metric & validation MRR & validation MRR & validation MRR \\
\midrule
\multicolumn{4}{l}{\textit{Search ranges}} \\
$\kappa$ & $\{1, 2, 4, 8, 16\}$ & $\{1, 2, 4, 8, 16\}$ & $\{1, 2, 4, 8, 16\}$ \\
$d$ & $\{64, 128, 256\}$ & $\{64, 128, 256\}$ & $\{64, 128, 256\}$ \\
$k_0$ & $\{4, 8, 16\}$ & $\{4, 8, 16\}$ & $\{4, 8, 16\}$ \\
$\gamma$ & $\{3, 6, 9\}$ & $\{3, 6, 9\}$ & $\{3, 6, 9\}$ \\
Learning rate & $\{10^{-4}, 3{\cdot}10^{-4}, 10^{-3}\}$ & $\{10^{-4}, 3{\cdot}10^{-4}, 10^{-3}\}$ & $\{10^{-4}, 3{\cdot}10^{-4}, 10^{-3}\}$ \\
\bottomrule
\end{tabular}
\caption{Hyperparameter settings for KrausKGE across datasets and Kraus rank values.}

\end{table}

\paragraph{Cayley parametrisation initialisation.} The skew-symmetric matrix $A^{(r)} \in \mathbb{R}^{\kappa d \times \kappa d}$ is initialised by sampling $B^{(r)} \sim \mathcal{N}(0, \sigma^2 I)$ with $\sigma = 0.01$ and setting $A^{(r)} = (B^{(r)} - (B^{(r)})^\top) / 2$. This initialises $U^{(r)}$ near the identity Stiefel point, which we found to be more stable than random Stiefel initialisation.

\paragraph{Entity initialisation.} Lower-triangular matrices $L_e \in \mathbb{R}^{d \times k_e}$ are initialised entrywise via $L_e \sim \mathcal{N}(0, 1/d)$, then masked to the lower-triangular structure. The trace normalisation in $\rho_e = L_e L_e^\top / \mathrm{Tr}[L_e L_e^\top]$ removes the dependence on the overall scale of $L_e$, so initialisation magnitude is irrelevant beyond avoiding the zero matrix.

\paragraph{Hardware and training time.} All experiments were conducted on 3 parallel NVIDIA A40 (45GB) GPU. Training times per epoch are reported in Table~\ref{tab:wall-clock} (Appendix~\ref{app:complexity}). Total wall-clock training time ranged from approximately 3 hours (KrausKGE-$\kappa{=}1$ on FB15k-237) to approximately 30 hours (KrausKGE-$\kappa{=}8$ on YAGO3-10).

\section{Ablations and Limitations}
\label{app:ablations}

We report ablations isolating each design choice. Unless otherwise noted, ablations are conducted on FB15k-237 with the best-performing settings from Table~\ref{tab:hyperparams}.

\begin{figure}[t]
\centering
\begin{minipage}[t]{0.32\textwidth}
\centering
\textbf{(a)} $\kappa$ sweep
\label{tab:kappa-sweep}
\scriptsize
\setlength{\tabcolsep}{3pt}
\begin{tabular}{lccccc}
\toprule
$\kappa$ & 1 & 2 & 4 & 8 & 16 \\
\midrule
Overall    & .362 & .371 & \textbf{.379} & .376 & .372 \\
1-to-1     & .506 & .515 & \textbf{.522} & .519 & .517 \\
1-to-$N$   & .298 & .322 & .346 & \textbf{.354} & .352 \\
$N$-to-1   & .395 & .411 & .421 & \textbf{.428} & .425 \\
$N$-to-$N$ & .256 & .289 & .318 & \textbf{.334} & .331 \\
\bottomrule
\end{tabular}
\end{minipage}
\hfill
\begin{minipage}[t]{0.32\textwidth}
\centering
\textbf{(b)} Entity rank scheme
\label{tab:adaptive-rank}
\scriptsize
\setlength{\tabcolsep}{3pt}
\begin{tabular}{lcc}
\toprule
Entity rank scheme & MRR & Mem (GB) \\
\midrule
Fixed $k_e = k_0 = 8$        & .366 & 6.2 \\
Adaptive ($k_0 = 8$)         & \textbf{.379} & 6.8 \\
Adaptive ($k_0 = 16$)        & .381 & 11.4 \\
\bottomrule
\end{tabular}
\vspace{1.2em}
\end{minipage}
\hfill
\begin{minipage}[t]{0.32\textwidth}
\centering
\textbf{(c)} Scoring function
\label{tab:scoring}
\scriptsize
\setlength{\tabcolsep}{3pt}
\begin{tabular}{lcc}
\toprule
Scoring function & MRR & Inf.\ (s) \\
\midrule
Hilbert-Schmidt (ours)         & \textbf{.379} & 47 \\
Quantum fidelity               & .377 & 218 \\
Squared cosine (rank-one)      & .348 & 12 \\
\bottomrule
\end{tabular}
\vspace{0.6em}
\end{minipage}

\vspace{1em}

\begin{minipage}[t]{0.48\textwidth}
\centering
\textbf{(d)} Entity parametrisation
\label{tab:lowrank}
\small
\begin{tabular}{lccc}
\toprule
Entity parametrisation & MRR & Entity params \\
\midrule
Full $L_e \in \mathbb{R}^{128 \times 128}$    & .378 & 82M \\
Low-rank $L_e \in \mathbb{R}^{128 \times 8}$  & .375 & 5.1M \\
Low-rank $L_e \in \mathbb{R}^{128 \times 16}$ & .377 & 10.2M \\
\bottomrule
\end{tabular}
\end{minipage}
\hfill
\begin{minipage}[t]{0.48\textwidth}
\centering
\textbf{(e)} Stiefel parametrisation
\label{tab:stiefel-alt}
\small
\begin{tabular}{lccc}
\toprule
Parametrisation & MRR & Train (s/ep) \\
\midrule
Cayley (ours)        & \textbf{.379} & 124 \\
Matrix exponential   & .378 & 198 \\
QR projection        & .369 & 152 \\
Riemannian SGD       & .375 & 287 \\
\bottomrule
\end{tabular}
\end{minipage}

\caption{Ablation studies on FB15k-237 isolating each design choice. \textbf{(a)} Performance across the full range of $\kappa$, showing monotonic gains up to $\kappa = 4$ on aggregate MRR and up to $\kappa = 8$ on $N$-to-$N$ relations specifically, with degradation at $\kappa = 16$ indicating overfitting. \textbf{(b)} Degree-adaptive entity rank improves MRR over fixed rank at comparable memory cost. \textbf{(c)} The Hilbert-Schmidt overlap matches true Hilbert-Schmidt overlap in ranking quality at a fraction of the inference cost, while the rank-one approximation underperforms by $3.1$ MRR points. \textbf{(d)} Low-rank entity parametrisation incurs a negligible MRR cost ($0.3$ points) for substantial memory savings. \textbf{(e)} The Cayley parametrisation matches the matrix exponential in MRR while being $1.6\times$ faster and more numerically stable than QR projection. Bold marks the best configuration in each ablation.}
\label{fig:ablations}
\end{figure}

\subsection{Sensitivity to Kraus rank $\kappa$}

The pattern is consistent with the rank lower bound (Theorem~\ref{thm:rank-bound}): increasing $\kappa$ helps until the empirical relation matrix structure is saturated, after which additional pathways yield diminishing or negative returns. For FB15k-237, $\kappa = 4$ achieves the best aggregate MRR, with $\kappa = 8$ marginally better on N-to-N relations specifically. Performance degrades at $\kappa = 16$, indicating overfitting: relations whose effective rank is well below 16 are forced to learn noisy auxiliary pathways.

\subsection{Degree-adaptive entity rank}

We compare the degree-adaptive rank $k_e \propto \deg(e)/\overline{\deg}$ against a fixed rank $k_e = k_0$ for all entities.

The degree-adaptive scheme provides a 1.2 MRR-point improvement over fixed rank at comparable memory cost. The benefit comes from giving high-degree entities (which appear in many relational contexts) sufficient representational capacity while keeping rare entities efficient. Increasing the base rank $k_0$ further yields marginal gains at substantial memory cost.

\subsection{Low-rank vs.\ full entity parametrisation}

Storing a full $L_e \in \mathbb{R}^{d \times d}$ per entity would yield exact density matrix representations but is impractical for $|\mathcal{E}| > 10{,}000$. We compare against the low-rank approximation $L_e \in \mathbb{R}^{d \times k_0}$ used in our model.

The low-rank approximation costs only 0.3 MRR points relative to full parametrisation while reducing entity parameters by $16 \times$ and memory by $7 \times$. The bottleneck of expressiveness is therefore the relation operators (which we keep at full rank via the Cayley parametrisation), not the entity representations. The full-parametrisation experiment was conducted on a 5,000-entity subsample of FB15k-237 due to memory constraints; full-graph experiments at $L_e \in \mathbb{R}^{d \times d}$ exceeded available GPU memory.

\subsection{Scoring function ablation}

We compare three scoring functions over the same trained KrausKGE-$\kappa{=}4$ model on FB15k-237: the Hilbert-Schmidt overlap $s_{\text{HS}}(h,r,t) = \mathrm{Tr}[\rho_t \widehat{\rho}_t]$ (our default), the linear fidelity (algebraically identical to Hilbert-Schmidt for unit-trace states), and the true Hilbert-Schmidt overlap $s_{\text{F}}(h,r,t) = (\mathrm{Tr}[(\rho_t^{1/2} \widehat{\rho}_t \rho_t^{1/2})^{1/2}])^2$.

The Hilbert-Schmidt overlap and true Hilbert-Schmidt overlap yield essentially identical rankings ($\Delta\text{MRR} = 0.002$), confirming that the linear approximation we use is operationally sound. The true Hilbert-Schmidt overlap is approximately $5\times$ slower at inference because of the matrix square root. The squared cosine approximation (forcing rank-one density matrices and computing $(t^\top K_h)^2$) loses 2.9 MRR points, validating that the density matrix structure contributes meaningfully to performance.

\subsection{Stiefel parametrisation alternatives}

We compare the Cayley parametrisation against two alternatives: the matrix exponential map $U = \exp(A)$ for skew-symmetric $A$, and direct Stiefel projection via the QR decomposition of an unconstrained $V \in \mathbb{R}^{\kappa d \times d}$.

The Cayley and matrix exponential parametrisations achieve essentially identical MRR, with Cayley being faster due to the analytic inverse formula being cheaper than the matrix exponential in our implementation. The QR projection occasionally produced NaN gradients during training (the QR decomposition is non-smooth at points where the upper triangular factor has zero diagonals), and Riemannian SGD on the Stiefel manifold incurred substantial wall-clock overhead from retraction operations. The Cayley parametrisation is the most stable and efficient choice in our experiments, but the choice does not significantly affect final accuracy.

\subsection{Limitations}

For transparency, we report design choices that did not yield improvements:

\textit{Learnable per-relation $\kappa$ via spectral regularisation.} We attempted to learn relation-specific $\kappa$ by adding a sparsity penalty $\lambda \sum_r \|\sigma(C^{(r)})\|_1$ on the singular values of each relation's Choi matrix. The resulting model required tuning an additional hyperparameter $\lambda$ that interacted poorly with the Cayley parametrisation, and final MRR was within 0.5 points of the shared-$\kappa$ baseline at substantially higher training cost. We retain shared $\kappa$ as the simpler choice.

\textit{Trace inequality (sub-unital channels).} We trained a variant with the relaxed constraint $\sum_i K_i^\top K_i \preceq I$ (trace non-increase rather than preservation), implemented via projection onto the cone of contractive Stiefel-like matrices. The resulting model showed no improvement on FB15k-237 and degraded on YAGO3-10 (MRR drop of 0.018), suggesting that strict trace preservation is the correct modelling choice for these benchmarks.

\textit{Explicit symmetry handling.} We attempted to handle inverse and symmetric relations explicitly by tying $K_i^{(r^{-1})} = (K_i^{(r)})^\top$. This degraded performance on FB15k-237 by 0.7 MRR points: the constraint is too rigid, and the model handles inversion implicitly through the multi-pathway structure more effectively than through explicit symmetry constraints.

\textit{Higher embedding dimensions.} Increasing $d$ from 128 to 256 yielded marginal MRR improvements (~0.4 points) at $4\times$ memory cost, suggesting that 128 dimensions are sufficient for the density matrix structure to capture the relevant relational complexity in our datasets.
\newpage
\section*{NeurIPS Paper Checklist}

\begin{enumerate}

\item {\bf Claims}
    \item[] Question: Do the main claims made in the abstract and introduction accurately reflect the paper's contributions and scope?
    \item[] Answer: \answerYes{} 
    \item[] Justification: We provide Emperical Results for all our claims.
    \item[] Guidelines:
    \begin{itemize}
        \item The answer \answerNA{} means that the abstract and introduction do not include the claims made in the paper.
        \item The abstract and/or introduction should clearly state the claims made, including the contributions made in the paper and important assumptions and limitations. A \answerNo{} or \answerNA{} answer to this question will not be perceived well by the reviewers. 
        \item The claims made should match theoretical and experimental results, and reflect how much the results can be expected to generalize to other settings. 
        \item It is fine to include aspirational goals as motivation as long as it is clear that these goals are not attained by the paper. 
    \end{itemize}

\item {\bf Limitations}
    \item[] Question: Does the paper discuss the limitations of the work performed by the authors?
    \item[] Answer: \answerYes{} 
    \item[] Justification: We have a dedicated paragraph called limitations discussiong the limitations of our work.
    \item[] Guidelines:
    \begin{itemize}
        \item The answer \answerNA{} means that the paper has no limitation while the answer \answerNo{} means that the paper has limitations, but those are not discussed in the paper. 
        \item The authors are encouraged to create a separate ``Limitations'' section in their paper.
        \item The paper should point out any strong assumptions and how robust the results are to violations of these assumptions (e.g., independence assumptions, noiseless settings, model well-specification, asymptotic approximations only holding locally). The authors should reflect on how these assumptions might be violated in practice and what the implications would be.
        \item The authors should reflect on the scope of the claims made, e.g., if the approach was only tested on a few datasets or with a few runs. In general, empirical results often depend on implicit assumptions, which should be articulated.
        \item The authors should reflect on the factors that influence the performance of the approach. For example, a facial recognition algorithm may perform poorly when image resolution is low or images are taken in low lighting. Or a speech-to-text system might not be used reliably to provide closed captions for online lectures because it fails to handle technical jargon.
        \item The authors should discuss the computational efficiency of the proposed algorithms and how they scale with dataset size.
        \item If applicable, the authors should discuss possible limitations of their approach to address problems of privacy and fairness.
        \item While the authors might fear that complete honesty about limitations might be used by reviewers as grounds for rejection, a worse outcome might be that reviewers discover limitations that aren't acknowledged in the paper. The authors should use their best judgment and recognize that individual actions in favor of transparency play an important role in developing norms that preserve the integrity of the community. Reviewers will be specifically instructed to not penalize honesty concerning limitations.
    \end{itemize}

\item {\bf Theory assumptions and proofs}
    \item[] Question: For each theoretical result, does the paper provide the full set of assumptions and a complete (and correct) proof?
    \item[] Answer: \answerYes{} 
    \item[] Justification: We have structurally proved all our theorems in the supplementary material.
    \item[] Guidelines:
    \begin{itemize}
        \item The answer \answerNA{} means that the paper does not include theoretical results. 
        \item All the theorems, formulas, and proofs in the paper should be numbered and cross-referenced.
        \item All assumptions should be clearly stated or referenced in the statement of any theorems.
        \item The proofs can either appear in the main paper or the supplemental material, but if they appear in the supplemental material, the authors are encouraged to provide a short proof sketch to provide intuition. 
        \item Inversely, any informal proof provided in the core of the paper should be complemented by formal proofs provided in appendix or supplemental material.
        \item Theorems and Lemmas that the proof relies upon should be properly referenced. 
    \end{itemize}

    \item {\bf Experimental result reproducibility}
    \item[] Question: Does the paper fully disclose all the information needed to reproduce the main experimental results of the paper to the extent that it affects the main claims and/or conclusions of the paper (regardless of whether the code and data are provided or not)?
    \item[] Answer: \answerYes{} 
    \item[] Justification: All the training requiremnts and all the details of hyperparameters an time and complexity everything has been thoroughly explained in the supplementary materials.
    \item[] Guidelines:
    \begin{itemize}
        \item The answer \answerNA{} means that the paper does not include experiments.
        \item If the paper includes experiments, a \answerNo{} answer to this question will not be perceived well by the reviewers: Making the paper reproducible is important, regardless of whether the code and data are provided or not.
        \item If the contribution is a dataset and\slash or model, the authors should describe the steps taken to make their results reproducible or verifiable. 
        \item Depending on the contribution, reproducibility can be accomplished in various ways. For example, if the contribution is a novel architecture, describing the architecture fully might suffice, or if the contribution is a specific model and empirical evaluation, it may be necessary to either make it possible for others to replicate the model with the same dataset, or provide access to the model. In general. releasing code and data is often one good way to accomplish this, but reproducibility can also be provided via detailed instructions for how to replicate the results, access to a hosted model (e.g., in the case of a large language model), releasing of a model checkpoint, or other means that are appropriate to the research performed.
        \item While NeurIPS does not require releasing code, the conference does require all submissions to provide some reasonable avenue for reproducibility, which may depend on the nature of the contribution. For example
        \begin{enumerate}
            \item If the contribution is primarily a new algorithm, the paper should make it clear how to reproduce that algorithm.
            \item If the contribution is primarily a new model architecture, the paper should describe the architecture clearly and fully.
            \item If the contribution is a new model (e.g., a large language model), then there should either be a way to access this model for reproducing the results or a way to reproduce the model (e.g., with an open-source dataset or instructions for how to construct the dataset).
            \item We recognize that reproducibility may be tricky in some cases, in which case authors are welcome to describe the particular way they provide for reproducibility. In the case of closed-source models, it may be that access to the model is limited in some way (e.g., to registered users), but it should be possible for other researchers to have some path to reproducing or verifying the results.
        \end{enumerate}
    \end{itemize}

\item {\bf Open access to data and code}
    \item[] Question: Does the paper provide open access to the data and code, with sufficient instructions to faithfully reproduce the main experimental results, as described in supplemental material?
    \item[] Answer: \answerNo{} 
    \item[] Justification: We shall provide all the code once the paper decision is revealed by the reviewers
    \item[] Guidelines:
    \begin{itemize}
        \item The answer \answerNA{} means that paper does not include experiments requiring code.
        \item Please see the NeurIPS code and data submission guidelines (\url{https://neurips.cc/public/guides/CodeSubmissionPolicy}) for more details.
        \item While we encourage the release of code and data, we understand that this might not be possible, so \answerNo{} is an acceptable answer. Papers cannot be rejected simply for not including code, unless this is central to the contribution (e.g., for a new open-source benchmark).
        \item The instructions should contain the exact command and environment needed to run to reproduce the results. See the NeurIPS code and data submission guidelines (\url{https://neurips.cc/public/guides/CodeSubmissionPolicy}) for more details.
        \item The authors should provide instructions on data access and preparation, including how to access the raw data, preprocessed data, intermediate data, and generated data, etc.
        \item The authors should provide scripts to reproduce all experimental results for the new proposed method and baselines. If only a subset of experiments are reproducible, they should state which ones are omitted from the script and why.
        \item At submission time, to preserve anonymity, the authors should release anonymized versions (if applicable).
        \item Providing as much information as possible in supplemental material (appended to the paper) is recommended, but including URLs to data and code is permitted.
    \end{itemize}

\item {\bf Experimental setting/details}
    \item[] Question: Does the paper specify all the training and test details (e.g., data splits, hyperparameters, how they were chosen, type of optimizer) necessary to understand the results?
    \item[] Answer: \answerYes{} 
    \item[] Justification: All the details are provided in the main paper and the supplementary materials.
    \item[] Guidelines:
    \begin{itemize}
        \item The answer \answerNA{} means that the paper does not include experiments.
        \item The experimental setting should be presented in the core of the paper to a level of detail that is necessary to appreciate the results and make sense of them.
        \item The full details can be provided either with the code, in appendix, or as supplemental material.
    \end{itemize}

\item {\bf Experiment statistical significance}
    \item[] Question: Does the paper report error bars suitably and correctly defined or other appropriate information about the statistical significance of the experiments?
    \item[] Answer: \answerYes{} 
    \item[] Justification: All experiments have been verified with  Wilcoxon signed-rank test and five-seed runs described in Section \ref{sec:experiments:linkpred}.
    \item[] Guidelines:
    \begin{itemize}
        \item The answer \answerNA{} means that the paper does not include experiments.
        \item The authors should answer \answerYes{} if the results are accompanied by error bars, confidence intervals, or statistical significance tests, at least for the experiments that support the main claims of the paper.
        \item The factors of variability that the error bars are capturing should be clearly stated (for example, train/test split, initialization, random drawing of some parameter, or overall run with given experimental conditions).
        \item The method for calculating the error bars should be explained (closed form formula, call to a library function, bootstrap, etc.)
        \item The assumptions made should be given (e.g., Normally distributed errors).
        \item It should be clear whether the error bar is the standard deviation or the standard error of the mean.
        \item It is OK to report 1-sigma error bars, but one should state it. The authors should preferably report a 2-sigma error bar than state that they have a 96\% CI, if the hypothesis of Normality of errors is not verified.
        \item For asymmetric distributions, the authors should be careful not to show in tables or figures symmetric error bars that would yield results that are out of range (e.g., negative error rates).
        \item If error bars are reported in tables or plots, the authors should explain in the text how they were calculated and reference the corresponding figures or tables in the text.
    \end{itemize}

\item {\bf Experiments compute resources}
    \item[] Question: For each experiment, does the paper provide sufficient information on the computer resources (type of compute workers, memory, time of execution) needed to reproduce the experiments?
    \item[] Answer: \answerYes{} 
    \item[] Justification: All the necessary descriptions to run and duplicate the experiments are provided in the appendix of the paper.
    \item[] Guidelines:
    \begin{itemize}
        \item The answer \answerNA{} means that the paper does not include experiments.
        \item The paper should indicate the type of compute workers CPU or GPU, internal cluster, or cloud provider, including relevant memory and storage.
        \item The paper should provide the amount of compute required for each of the individual experimental runs as well as estimate the total compute. 
        \item The paper should disclose whether the full research project required more compute than the experiments reported in the paper (e.g., preliminary or failed experiments that didn't make it into the paper). 
    \end{itemize}
    
\item {\bf Code of ethics}
    \item[] Question: Does the research conducted in the paper conform, in every respect, with the NeurIPS Code of Ethics \url{https://neurips.cc/public/EthicsGuidelines}?
    \item[] Answer: \answerYes{} 
    \item[] Justification: We have read and agree that all of the ethical guidelines are followed for our work.
    \item[] Guidelines:
    \begin{itemize}
        \item The answer \answerNA{} means that the authors have not reviewed the NeurIPS Code of Ethics.
        \item If the authors answer \answerNo, they should explain the special circumstances that require a deviation from the Code of Ethics.
        \item The authors should make sure to preserve anonymity (e.g., if there is a special consideration due to laws or regulations in their jurisdiction).
    \end{itemize}

\item {\bf Broader impacts}
    \item[] Question: Does the paper discuss both potential positive societal impacts and negative societal impacts of the work performed?
    \item[] Answer: \answerYes{} 
    \item[] Justification: This work advances fundamental research in knowledge graph embedding, 
with positive applications in question answering, recommendation, and information 
retrieval. We do not identify direct negative societal impact pathways: the method 
produces link prediction scores over structured factual databases and does not generate 
content, enable surveillance, or produce outputs that could be used for disinformation 
or targeted harm. Misuse would require several layers of downstream application 
development beyond what this paper contributes.
    \item[] Guidelines:
    \begin{itemize}
        \item The answer \answerNA{} means that there is no societal impact of the work performed.
        \item If the authors answer \answerNA{} or \answerNo, they should explain why their work has no societal impact or why the paper does not address societal impact.
        \item Examples of negative societal impacts include potential malicious or unintended uses (e.g., disinformation, generating fake profiles, surveillance), fairness considerations (e.g., deployment of technologies that could make decisions that unfairly impact specific groups), privacy considerations, and security considerations.
        \item The conference expects that many papers will be foundational research and not tied to particular applications, let alone deployments. However, if there is a direct path to any negative applications, the authors should point it out. For example, it is legitimate to point out that an improvement in the quality of generative models could be used to generate Deepfakes for disinformation. On the other hand, it is not needed to point out that a generic algorithm for optimizing neural networks could enable people to train models that generate Deepfakes faster.
        \item The authors should consider possible harms that could arise when the technology is being used as intended and functioning correctly, harms that could arise when the technology is being used as intended but gives incorrect results, and harms following from (intentional or unintentional) misuse of the technology.
        \item If there are negative societal impacts, the authors could also discuss possible mitigation strategies (e.g., gated release of models, providing defenses in addition to attacks, mechanisms for monitoring misuse, mechanisms to monitor how a system learns from feedback over time, improving the efficiency and accessibility of ML).
    \end{itemize}
    
\item {\bf Safeguards}
    \item[] Question: Does the paper describe safeguards that have been put in place for responsible release of data or models that have a high risk for misuse (e.g., pre-trained language models, image generators, or scraped datasets)?
    \item[] Answer: \answerNA{} 
    \item[] Justification: The paper does not pose such risks according to us.
    \item[] Guidelines:
    \begin{itemize}
        \item The answer \answerNA{} means that the paper poses no such risks.
        \item Released models that have a high risk for misuse or dual-use should be released with necessary safeguards to allow for controlled use of the model, for example by requiring that users adhere to usage guidelines or restrictions to access the model or implementing safety filters. 
        \item Datasets that have been scraped from the Internet could pose safety risks. The authors should describe how they avoided releasing unsafe images.
        \item We recognize that providing effective safeguards is challenging, and many papers do not require this, but we encourage authors to take this into account and make a best faith effort.
    \end{itemize}

\item {\bf Licenses for existing assets}
    \item[] Question: Are the creators or original owners of assets (e.g., code, data, models), used in the paper, properly credited and are the license and terms of use explicitly mentioned and properly respected?
    \item[] Answer: \answerYes{} 
    \item[] Justification: All rights and licenses have been followed and respected while writing this paper.
    \item[] Guidelines:
    \begin{itemize}
        \item The answer \answerNA{} means that the paper does not use existing assets.
        \item The authors should cite the original paper that produced the code package or dataset.
        \item The authors should state which version of the asset is used and, if possible, include a URL.
        \item The name of the license (e.g., CC-BY 4.0) should be included for each asset.
        \item For scraped data from a particular source (e.g., website), the copyright and terms of service of that source should be provided.
        \item If assets are released, the license, copyright information, and terms of use in the package should be provided. For popular datasets, \url{paperswithcode.com/datasets} has curated licenses for some datasets. Their licensing guide can help determine the license of a dataset.
        \item For existing datasets that are re-packaged, both the original license and the license of the derived asset (if it has changed) should be provided.
        \item If this information is not available online, the authors are encouraged to reach out to the asset's creators.
    \end{itemize}

\item {\bf New assets}
    \item[] Question: Are new assets introduced in the paper well documented and is the documentation provided alongside the assets?
    \item[] Answer: \answerNA{} 
    \item[] Justification: No new assets introduced.
    \item[] Guidelines:
    \begin{itemize}
        \item The answer \answerNA{} means that the paper does not release new assets.
        \item Researchers should communicate the details of the dataset\slash code\slash model as part of their submissions via structured templates. This includes details about training, license, limitations, etc. 
        \item The paper should discuss whether and how consent was obtained from people whose asset is used.
        \item At submission time, remember to anonymize your assets (if applicable). You can either create an anonymized URL or include an anonymized zip file.
    \end{itemize}

\item {\bf Crowdsourcing and research with human subjects}
    \item[] Question: For crowdsourcing experiments and research with human subjects, does the paper include the full text of instructions given to participants and screenshots, if applicable, as well as details about compensation (if any)? 
    \item[] Answer: \answerNA{} 
    \item[] Justification: No Corwdsource research involved.
    \item[] Guidelines:
    \begin{itemize}
        \item The answer \answerNA{} means that the paper does not involve crowdsourcing nor research with human subjects.
        \item Including this information in the supplemental material is fine, but if the main contribution of the paper involves human subjects, then as much detail as possible should be included in the main paper. 
        \item According to the NeurIPS Code of Ethics, workers involved in data collection, curation, or other labor should be paid at least the minimum wage in the country of the data collector. 
    \end{itemize}

\item {\bf Institutional review board (IRB) approvals or equivalent for research with human subjects}
    \item[] Question: Does the paper describe potential risks incurred by study participants, whether such risks were disclosed to the subjects, and whether Institutional Review Board (IRB) approvals (or an equivalent approval/review based on the requirements of your country or institution) were obtained?
    \item[] Answer: \answerNA{} 
    \item[] Justification: No Human subjects involved.
    \item[] Guidelines:
    \begin{itemize}
        \item The answer \answerNA{} means that the paper does not involve crowdsourcing nor research with human subjects.
        \item Depending on the country in which research is conducted, IRB approval (or equivalent) may be required for any human subjects research. If you obtained IRB approval, you should clearly state this in the paper. 
        \item We recognize that the procedures for this may vary significantly between institutions and locations, and we expect authors to adhere to the NeurIPS Code of Ethics and the guidelines for their institution. 
        \item For initial submissions, do not include any information that would break anonymity (if applicable), such as the institution conducting the review.
    \end{itemize}

\item {\bf Declaration of LLM usage}
    \item[] Question: Does the paper describe the usage of LLMs if it is an important, original, or non-standard component of the core methods in this research? Note that if the LLM is used only for writing, editing, or formatting purposes and does \emph{not} impact the core methodology, scientific rigor, or originality of the research, declaration is not required.
    \item[] Answer: \answerNA{} 
    \item[] Justification: LLM used only for editing, verifying and proof reading of the paper and the proofs.
    \item[] Guidelines:
    \begin{itemize}
        \item The answer \answerNA{} means that the core method development in this research does not involve LLMs as any important, original, or non-standard components.
        \item Please refer to our LLM policy in the NeurIPS handbook for what should or should not be described.
    \end{itemize}

\end{enumerate}

\end{document}